\documentclass{tlp}

\setcitestyle{numbers,square, citesep=comma}
\makeatletter
\def\@j@urnal{Theory and Practice of Logic Programming}
\renewcommand{\@biblabel}[1]{[#1]}
\makeatother

\journal{Theory and Practice of Logic Programming}

\usepackage{algorithm}
\usepackage{verbatim}
\usepackage{algpseudocode}
\usepackage{tikz}
\usepackage{amsmath}
\usepackage{graphicx}
\usepackage{amssymb}
\usepackage{enumitem}
\usepackage{multirow}
\usepackage{hyperref}   
\usepackage{subcaption}
\usepackage{color}
\newtheorem{theorem}{Theorem}[section]
\newtheorem{corollary}{Corollary}[theorem]
\newtheorem{lemma}[theorem]{Lemma}
\usetikzlibrary{positioning, arrows.meta, calc, shapes}
\usepackage{pgf}
\usetikzlibrary{arrows,shapes.arrows,shapes.geometric,shapes.multipart, decorations.pathmorphing,positioning,shapes.swigs,}

\newcommand{\myparagraph}[1]{\vspace{0.5em}\noindent\textbf{#1.}}

\begin{document}

\lefttitle{Habib et al.}

\jnlPage{1}{8}
\jnlDoiYr{2026}
\doival{10.1017/xxxxx}

\title[Single World Intervention Programs]{Efficient Counterfactual Reasoning in ProbLog via Single World Intervention Programs}%\thanks{Code available at https://github.com/saibib/swip }}

\begin{authgrp}
\author{\sn{Saimun} \gn{Habib}*$\dagger$}
\email{S.Habib-1@ed.ac.uk}
\author{\sn{Vaishak} \gn{Belle}*}
\email{vbelle@ed.ac.uk}
\author{\sn{Fengxiang} \gn{He}*}
\email{F.He@ed.ac.uk}

\affiliation{*University of Edinburgh, Edinburgh, United Kingdom}
\affiliation{$\dagger$ The MITRE Corporation, McLean, Virgina, USA\footnote{ Affiliation with The MITRE Corporation is provided for identification purposes only, and is not intended to convey or imply MITRE’s concurrence with, or support for, the positions, opinions, or viewpoints expressed by the author.}}

\end{authgrp}

\history{\sub{xx xx xxxx;} \rev{xx xx xxxx;} \acc{xx xx xxxx}}

\maketitle

\begin{abstract}
Probabilistic Logic Programming (PLP) languages, like ProbLog, naturally support reasoning under uncertainty, while maintaining a declarative and interpretable framework. Meanwhile, counterfactual reasoning (i.e., answering ``what  if'' questions) is critical for ensuring AI systems are robust and trustworthy; however, integrating this capability into PLP can be computationally prohibitive and unstable in accuracy. This paper addresses this challenge, by proposing an efficient program transformation for counterfactuals as Single World Intervention Programs (SWIPs) in ProbLog. By systematically splitting ProbLog clauses to observed and fixed components relevant to a counterfactual, we create a transformed program that (1) does not asymptotically exceed the computational complexity of existing methods, and is strictly smaller in common cases, and (2) reduces counterfactual reasoning to marginal inference over a simpler program. We formally prove the correctness of our approach,  which relies on a weaker set independence assumptions and is consistent with conditional independencies, showing the resulting marginal probabilities match the counterfactual distributions of the underlying Structural Causal Model in wide domains. Our method achieves a 35\% reduction in inference time versus existing methods in extensive experiments. This work makes complex counterfactual reasoning more computationally tractable and reliable, providing a crucial step towards developing more robust and explainable AI systems. The code is at \url{https://github.com/EVIEHub/swip}.
\end{abstract}

\begin{keywords}
counterfactual reasoning, probabilistic logic programming, ProbLog, causality, single  world intervention graph, structural causal models%, CP  logic
\end{keywords}

\section{Introduction}
\label{sec:intro}
Among causal queries, counterfactual questions of the form “What if $X$ had been different?” are crucial for explanation, diagnosis, credit assignment, and decision making \cite{pearl_book_2018, halpern_actual_2019}. To situate its value, it is helpful to distinguish probabilistic, causal, and counterfactual modeling along Pearl’s “ladder of causality”  \cite{pearl_causality_2009}. While probabilistic models describe associations, Structural Causal Models (SCMs) additionally specify directed functional relationships between variables with exogenous noise and how these relationships change under interventions \cite{pearl_causality_2009}. This hierarchy of associational, interventional, and counterfactual queries captures increasingly expressive forms of reasoning, and in particular, the capacity for counterfactual reasoning is a critical capability for artificial intelligence systems to discern effects of alternative choices by explicitly relating the actual world to hypothetical variants of a causal model’s structure under intervention to explain the observation(s) produced by the underlying generative mechanisms in either world \cite{gerstenberg_counterfactual_2024,epstude_functional}.
\newline \newline
\myparagraph{Why PLP?} \\
Counterfactuals in SCMs traditionally followed a procedure of abduction, action, and prediction \cite{pearl_causality_2009}, but Hopkins and Pearl argue SCMs become unwieldy in realistic domains as they lack first-order expressivity over entity and attribute relationals, quantified rules, and temporally structured actions \cite{hopkins_causality_2007}. For example, the classic “two-riflemen” scenario \cite{pearl_causality_2009} requires a model that distinguishes multiple agents, possible misfires, and a disjunctive causal mechanism. A propositional encoding must enumerate each marksman $A$ and $B$ taking aim at a target and whether they hit the target separately, whereas a relational encoding expresses the causal mechanism compactly: $\texttt{hits(Target) :- fired(Soldier,Target).}$
The broader lesson is that causal models benefit from symbolic structure, and symbolic systems benefit from principled causal semantics \cite{hopkins_causality_2007}. Probabilistic Logic Programming (PLP) sits naturally at this intersection. It provides explicit \emph{symbolic mechanisms} as interpretable rules describing how entities and relations interact, combined with probabilistic uncertainty. Such rules (e.g., \texttt{causes(Smoking,Cancer)}, \texttt{infects(P1,P2)}) support recursion, quantification, and relational generalization \cite{vennekens_cp-logic_2009, vennekens_embracing_2010}. However, strictly speaking PLP languages were not designed for causal inference \cite{pearl_causality_2009, vennekens_embracing_2010, vennekens_cp-logic_2009, DeRaedt.etal2007}. These languages lack a $do$-operator, formal intervention semantics, and tools for causal identification.
A simple ProbLog program illustrates this gap:
\begin{verbatim}
0.3::lifestyle(alice).
0.3::smokes(alice).
0.6::genetic_risk(alice).
cancer(alice) :- smokes(alice), genetic_risk(alice).
\end{verbatim}
Under distribution semantics, this program defines a joint probability over $\texttt{smokes(alice)}, \texttt{genetic\_risk(alice)}$, and $\texttt{cancer(alice)}$ with logical dependencies between them. Conditioning on \texttt{smokes(alice)=false} updates beliefs on Alice’s likelihood of cancer but does not implement the causal intervention $\do(\texttt{smokes(alice)}{=}false)$, which requires deleting the generative mechanism between smoking and cancer. This difference between conditioning and intervening marks the conceptual boundary between probabilistic and causal interpretation.

Recognizing this limitation, several PLP languages extend logic programming with causal meaning. Logic Program Annotated Disjunctions (LPADs), \cite{vennekens_logic_2004}, Causal Probabilistic-logic (CP-logic) \cite{vennekens_cp-logic_2009}, and ProbLog \cite{DeRaedt.etal2007} introduce probabilistic choices, and CP-logic interprets rules as probabilistic causal laws. Its intervention semantics disable or modify such laws, aligning CP-logic with SCM-style reasoning. However, computing counterfactuals still follows the abduction, action, prediction procedure and requires storing the full posterior $P(\mathbf{U} \mid \mathbf{E}=\mathbf{e})$ over exogenous causes $\mathbf{U}$ given the evidence or observations $\mathbf{e}$ .

To address this, another line of work adapts the \emph{Twin Network} construction from SCMs to ProbLog \cite{balke_probabilistic_2022, kiesel_what_2023}. Counterfactual inference is reduced to ordinary probabilistic inference by duplicating the program into factual and counterfactual copies linked by shared exogenous variables avoiding storing $P(U \mid E=e)$ and doing intervention and inference at once to compute $P(Y|do(X=x), E=e)$. While convenient, this approach suffers from (i) exponentially many cross-world independence assumptions \cite{richardson_single_2013,Shpitser.etal2021}, and (ii) demonstrable failures of these assumptions in important causal structures. In a practical sense, it is also limited for first-order relational models as it doubles program size, increasing compilation cost.

\subsection{Our Contributions}
In this work, we introduce \emph{Single-World Intervention Programs (SWIPs)}, a new method for counterfactual reasoning in ProbLog inspired by the Single-World Intervention Graph (SWIG) framework \cite{richardson_single_2013}. Rather than duplicating the model, SWIPs perform a semantics-preserving transformation of the program itself.

Returning to the earlier example, to compute $P(\texttt{cancer(alice)} \mid \do(\texttt{smokes(alice)}{=}0))$, SWIP removes the probabilistic fact \texttt{0.3::smokes(alice).}, inserts the deterministic fact \texttt{smokes(alice)=false}., and manipulates the rule for \texttt{cancer(alice).} by removing its dependence on the original smoking mechanism while making it reliant on the intervention.
More generally, given $\do(X{=}x)$, the SWIP transformation deletes all clauses defining $\mathbf{X}$,
inserts the deterministic fact(s) asserting $\mathbf{X} := \mathbf{x}$ in place of $\mathbf{X}$, and
eliminates redundant or unreachable rules through structural simplification.

The resulting SWIP is a simplified ProbLog program whose distribution matches the counterfactual semantics of the corresponding SCM. SWIPs offer practical advantages. Across extensive synthetic experiments, SWIPs produce significantly smaller unfolded programs and reduce compilation and inference time by approximately 35\% relative to Twin Networks. Since SWIPs simplify rather than duplicate rule structure, knowledge-compilation backends (d-DNNF, \texttt{SHARPSAT}) exploit the reduced treewidth directly \cite{eiter_treewidth-aware_2021}. We prove that (1) for any SCM encodable in ProbLog, SWIPs reproduce exactly the interventional and counterfactual distributions of the SCM under standard assumptions of unique supported models and faithful SCM encodings; (2) the SWIP approach is computationally less expensive than the Twin Network approach; and (3)
the grounded SWIP semantics coincide with CP-logic’s intervention semantics, unifying event-based and equation-based causal interpretations.

To our knowledge, SWIPs provide the first single-world, SCM-faithful counterfactual semantics inside ProbLog that avoid cross-world assumptions while retaining the expressive relational structure of logic programming.

\section{Preliminaries}
\label{sec:prelim}

\subsection{Causal Models and Counterfactuals}
\label{subsec:causal_models}

%Our work is grounded in the structural account of causality \cite{pearl_causality_2009}. 
A Structural Causal Model (SCM) $\mathcal M$ is defined as a tuple $\langle \mathbf{U},\mathbf{V}, \mathcal F\rangle$, where $\mathbf U$ are mutually independent exogenous variables, $\mathbf V$ are endogenous variables, and $\mathcal F = \{f_V\}_{V \in \mathbf{V}}$ is a set of structural equations of the form
$
V := f_V(\text{pa}(V), U_V),
$
with $\text{pa}(V)\subseteq \mathbf{V}$ and $U_V\subseteq \mathbf{U}$. Each SCM induces a directed acyclic graph (DAG) $\mathcal G$ encoding the causal dependencies among variables \cite{pearl_causality_2009}. Figure \ref{fig:generic} depicts a causal model between genetics $G$, lifestyle $L$, diet $D$, and health $H$ while omitting the implicit exogenous noise variables for simplicity.

% A Structural Causal Model (SCM) $\mathcal M$ can be represented by a tuple $\langle \mathbf{U},\mathbf{V}, \mathcal F\rangle$, where $\textbf U$ is a set of mutually independent exogenous variables, $\textbf V$ is a set of endogenous variables, and $\mathcal F = \{f_V\}_{V \in \textbf{V}}$ is a set of structural functions. These functions define a deterministic equations for each endogenous variable $V\in \textbf{V}$ as:
% \begin{equation*}
% V := f_V(\text{pa}(V), U_V)
% \end{equation*}
% where $pa(V)\subseteq \textbf{V}$ is the set of parents of $V$ and $U_V \subseteq \textbf{U}$ is the exogenous variables that directly influence $V$. Together the functions in $\mathcal{F}$ deterministically map any assignment $\mathbf u\in\mathbf{U}$ of exogenous variables to a unique assignment $\mathbf v\in\mathbf{V}$ of endogenous variables. Hence, a distribution $P(\mathbf u)$ over $\mathbf U$ is pushed forward, ie. induces, a distribution $P(\mathbf v)$ over $\mathbf V$ via these structural equations. We assume the model is well-posed, meaning the system of equations has a unique solution for every instantiation $u$ of $U$ \cite{pearl_causality_2009}. Each SCM $\mathcal M$ is associated with a directed acyclic graph (DAG) $\mathcal G$, where nodes represent variables in $\textbf{V}$ and directed edges point from parent to child.

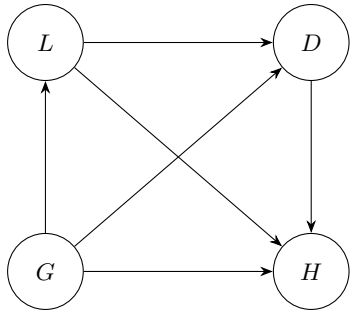
\begin{figure}[t]
\centering
\begin{tikzpicture}[
    node distance=2cm and 2.5cm,
    every node/.style={draw, circle, minimum size=1cm},
    ->, >=Stealth
]

% Nodes
\node (L) {$L$};
\node[right=of L] (D) {$D$};
\node[below=of D] (H) {$H$};
\node[below=of L] (G) {$G$};

% Edges
\draw (L) -- (D);
\draw (L) -- (H);
\draw (G) -- (D);
\draw (G) -- (H);
\draw (G) -- (L);
\draw (D) -- (H);

\end{tikzpicture}
\caption{A causal diagram with variables $L$, $D$, $G$, and $H$. The corresponding SCM declares $H$ as a function of $A$, $B$ and $C$,  $D$ as a function of $G$ and $L$, and $L$ as functions of $G$.}
\label{fig:generic}
\end{figure}

An intervention replaces the equations for a subset $\mathbf{X}\subseteq \mathbf{V}$ with constant assignments $\mathbf{X}:=\mathbf{x}$, producing a modified model $\mathcal{M}_{\mathbf{x}}$. The induced post-interventional distribution is
$
P(\mathbf{v} \mid do(\mathbf{X}:=\mathbf{x})) = 
\prod_{V_i \in \mathbf{V}\setminus\textbf{X}} P(v_i \mid \text{pa}(V_i))\cdot \mathbb{I}[x] \quad \text{in } \mathcal{M}_{\mathbf{x}}.
$
% Pearl’s do-calculus provides three transformation rules for identifying causal effects by adding, removing, or exchanging actions and observations under conditional-independence constraints derived from $\mathcal{G}$ \cite{pearl_causality_2009}. Detailed formulations of these rules are provided in Appendix~\ref{app:do_calculus} \sh{add appdx}.
Counterfactual queries, $P(Y \mid do(\mathbf{X}:=\mathbf{x}), \mathbf{E}{=}\mathbf{e})$, ask about outcomes under hypothetical interventions given observed evidence. Their computation typically follows the abduction, action, and prediction steps of \cite{balke_probabilistic_2022}. \emph{Abduction} requires storing the entire distribution of $P_{\mathcal{M}}(\textbf{U}|\textbf{E=e})$, while \emph{action} manipulates the model $\mathcal{M}$ into the intervened distribution $\mathcal{M}_\mathbf{x}$ for some assignment $\textbf{X}:=\textbf{x}$, and \emph{prediction} finally calculates $P_{\mathcal{M_{\mathbf{x}}}}(\mathbf{Y}|\mathbf{E}=\mathbf{e})$ \cite{pearl_causality_2009,balke_probabilistic_2022,kiesel_what_2023}, but the twin network circumvents the abduction step and combines action and prediction by taking $\mathcal{M}$ and creating $\mathcal{M}^K$ given by the tuple $\langle \textbf{U}, \textbf{V}' \cup \textbf{V}, \mathcal{F} \rangle$. It sets $\textbf{V}'=\textbf{V}$ and uses it to create a factual and counterfactual set of equations with shared exogeneity defined as 

\begin{equation*}
X := \begin{cases}
f_X(pa(X), U_X), \quad\,\, X \in \textbf{V} \\
f_X(pa(X)', U_X), \quad X \in \textbf{V}'
\end{cases}
\end{equation*} 
where $pa(X)' = \{X'|X\in pa(X)\}$. Interventions are set on the variables in $\textbf{V}'$, ie. $\textbf{X}' := \textbf{x}$ and we have:
\begin{equation*}
P_{\mathcal{M}}(\,\cdot\,|\textbf{E}=\textbf{e}, do(\textbf{X}:=\textbf{x})) = 
% P_{\mathcal{M}_\textbf{x}}(\,\cdot\,|\textbf{E}=\textbf{e}) = 
P_{\mathcal{M}_\textbf{x}^K}(\,\cdot\,|\textbf{E}=\textbf{e})
\end{equation*}

Alternatively, the Single-World Intervention Graph (SWIG) formalism provides a convenient graphical encoding of the abduction, action, and prediction steps used to evaluate counterfactuals \cite{richardson_single_2013, pearl_causality_2009}. 
Given a graph $\mathcal{G}$ encoding an SCM $\mathcal{M}$ and an intervention $do(\textbf{X}:=\textbf{x})$, a SWIG is obtained by splitting each intervened node $X \in \textbf{X}$ into two: a factual copy $X$ that receives incoming edges from $pa(X)$ and a fixed counterfactual copy $X':=x$ with outgoing edges to its original children. Exogenous variables remain as is, yielding a single graph that represents factual and counterfactual variables within one world while avoiding ad hoc duplication of independent noise sources. Following standard graphical separation criteria, the SWIG is a complete independence oracle and determines which counterfactual queries are identified by observed data \cite{richardson_single_2013}.

\subsection{ProbLog}
\label{subsec:ProbLog_sem}
ProbLog is a probabilistic logic programming language %that extends Prolog 
with distribution semantics \cite{DeRaedt.etal2007}. A ProbLog program $\mathcal{P}$ is a pair $(\mathrm{LP}(\mathcal{P}),\text{Facts}(\mathcal{P}))$. $\text{Facts}(\mathcal{P})$ is a set of probabilistic facts of the form $\pi::a$, where $a$ is a ground atom from the set of external propositions $\mathfrak{E(B)}$ and $\pi  \in [0,1]$ is its probability. These facts are assumed to be mutually independent and correspond to the exogenous variables $\textbf{U}$ in an SCM. $\mathrm{LP}(\mathcal{P})$ is a set of logical clauses (rules) of the form $h\leftarrow b_1,\dots,b_n$, where $h$ is an internal proposition from $\mathfrak{\mathfrak{I(P)}}$ and the body $\{b_1,\dots,b_n\}$ is a set of literals. These rules correspond to the structural equations $\mathcal{F}$ in an SCM. 

The semantics of $\mathcal{P}$ define a probability distribution $\pi_{\mathcal{P}}$ over possible worlds $\omega$, where a world is a complete truth assignment to all ground atoms in $\mathfrak{B} = \mathfrak{J}(\mathfrak{B}) \cup \mathfrak{E}(\mathfrak{B})$. 
Formally, the probability of a world $\omega$ is
$
\pi_{\mathcal{P}}(\omega)
= \prod_{a_i \in \omega} \pi_i
  \prod_{a_i \notin \omega} \big(1 - \pi_i\big)
$
for all probabilistic facts $a_i \in Facts(P)$, i.e., the product of the probabilities of all true probabilistic facts and the complements of those that are false. The probability of a query $\phi$ is the sum of the probabilities of all worlds in which $\phi$ is true\cite{DeRaedt.etal2007}: 
$$\pi_{\mathcal{P}}(\phi)=\sum_{\omega \models	\phi}\pi_{\mathcal{P}}(\omega).$$
For a ProbLog program to correctly represent an SCM, it must have unique supported models, meaning that for any truth assignment to the external propositions $\mathfrak{E(P)}$, the logical rules in $\mathrm{LP}(\mathcal{P})$ must yield a single, unique truth assignment for all internal propositions $\mathfrak{\mathfrak{I(P)}}$. A sufficient condition for this property is that the dependency graph of $\mathrm{LP}(\mathcal{P})$ is acyclic \cite{Kiesel.etal2023}.

Within ProbLog, Pearl's d-separation reasoning for independence and identifiability can be implemented declaratively as a meta-interpreter that encodes both the syntactic rules of do-calculus and the associated independence checks as higher-order logic predicates \cite{ruckschlos_subtlety_nodate}. This enables automated symbolic reasoning about causal identifiability directly within the ProbLog environment. 

\section{Challenges of the Twin Network in ProbLog}
\label{sec:tn}

This section analyses a prevailing method for computing counterfactuals in ProbLog through a program transformation that constructs a Twin Network \cite{Kiesel.etal2023}. This approach operationalizes the SCM framework by creating a new ProbLog program, $\mathcal{T}(\mathcal{P})$, that explicitly represents both a factual and a counterfactual world. The transformation systematically duplicates all internal propositions $\mathfrak{I(P)}$ and their defining rules in $\mathrm{LP}(\mathcal{P})$, while the external propositions in $\text{Facts}(\mathcal{P})$ remain shared between the two worlds. This sharing of exogenous variables is the mechanism that relates the factual and counterfactual outcomes and the full procedure is detailed in Algorithm \ref{alg:twin_construct}. 

This approach, however, implicitly relies on a strong and untestable cross-world assumption of all of these exogenous variables being independent of one another and in fact, the number of assumptions required grows at a doubly exponential rate \cite{Shpitser.etal2021,richardson_single_2013}. The duplication of the entire program by nature introduces higher compute cost for program compilation in proportion to the the program size and length of clauses in the program. In the worst case, it is $\Theta(|\mathcal{P}|\cdot L_{\max})$
\begin{figure}[h]
\centering

% -----------------------------
%   (a) Original causal model
% -----------------------------
\begin{subfigure}[t]{0.48\textwidth}
\centering
\begin{tikzpicture}[
    every node/.style={draw, circle, minimum size=.5cm, font=\bfseries},
    ->, >=Stealth
]

% Node positions
\node (uL) at (1,4.5) {$\epsilon_L$};
\node (L)  at (1,2.5) {$L$};

\node (uG) at (3,6.5) {$\epsilon_G$};
\node (G)  at (3,4.5) {$G$};

\node (uD) at (5,4.5) {$\epsilon_D$};
\node (D)  at (5,2.5) {$D$};

\node (uH) at (3,0) {$\epsilon_H$};
\node (H)  at (3,1.5) {$H$};

% Edges
\draw (uL) -- (L);
\draw (uD) -- (D);
\draw (uG) -- (G);
\draw (uH) -- (H);

\draw (G) -- (L);
\draw (G) -- (D);
\draw (G) -- (H);
\draw (L) -- (H);
\draw (D) -- (H);
\draw (L) -- (D);

\end{tikzpicture}
\caption{Original causal model}
\end{subfigure}
\hspace{-0.5cm} % Shift subfigure (b) slightly left to avoid margin creep
% -----------------------------
%   (b) Twin Network
% -----------------------------
\begin{subfigure}[t]{0.48\textwidth}
\centering
\begin{tikzpicture}[
    every node/.style={draw, circle, minimum size=.5cm, font=\bfseries},
    ->, >=Stealth
]

% Noise variables
\node (uG2) at (4,6.5) {$\epsilon_G$};
\node (uL2) at (4,4.5) {$\epsilon_L$};
\node (uD2) at (4,2.5) {$\epsilon_D$};
\node (uH2) at (4,0.5) {$\epsilon_H$};

% Factual nodes
\node (G2) at (2,6) {$G$};
\node (L2) at (1,4) {$L$};
\node (D2) at (3,2) {$D$};
\node (H2) at (2,0) {$H$};

% Counterfactual nodes
\node (Gstar2) at (6,6) {$G^*$};
\node (Lstar2) at (7,4) {$L^*$};
\node (Dstar2) at (5,2) {$D^*$};
\node (Hstar2) at (6,0) {$H^*$};

% Edges (noise → factual)
\draw (uG2) -- (G2);
\draw (uL2) -- (L2);
\draw (uD2) -- (D2);
\draw (uH2) -- (H2);

% Edges (factual → factual)
\draw (G2) -- (L2);
\draw (G2) -- (D2);
\draw (G2) -- (H2);
\draw (L2) -- (H2);
\draw (D2) -- (H2);
\draw (L2) -- (D2);

% Noise → counterfactual
\draw (uG2) -- (Gstar2);
\draw (uL2) -- (Lstar2);
\draw (uD2) -- (Dstar2);
\draw (uH2) -- (Hstar2);

% Counterfactual dependencies
\draw (Gstar2) -- (Lstar2);
\draw (Gstar2) -- (Dstar2);
\draw (Gstar2) -- (Hstar2);
\draw (Lstar2) -- (Hstar2);
\draw (Dstar2) -- (Hstar2);
\draw (Lstar2) -- (Dstar2);

\end{tikzpicture}
\caption{Twin network with counterfactual variables}
\end{subfigure}

\caption{Side-by-side comparison of (a) the original structural causal model and (b) its corresponding twin network construction.}
\label{fig:twin}
\end{figure}

\begin{theorem}[Twin Network Transformation Complexity]
Let $\mathcal{P} = (\mathrm{LP}(\mathcal{P}), \mathrm{Facts}(\mathcal{P}))$ be a ProbLog program with $|\mathcal{P}| = |\mathrm{LP}(\mathcal{P})| + |\mathrm{Facts}(\mathcal{P})|$ denoting the total number of clauses and facts. Let $L_{\max}$ be the maximum body length of any clause in $\mathrm{LP}(\mathcal{P})$. Then the Twin Network transformation $\mathcal T(\mathcal{P})$ following Algorithm 2 from Kiesel et al. (2023) has complexity $\Theta(|\mathcal{P}| \cdot L_{\max})$.
\label{thrm:tn_complexity}
\end{theorem}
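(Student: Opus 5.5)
The plan is to prove matching upper and lower bounds by treating the transformation as one pass over the program that does constant work per literal. I take the cost model to be the number of elementary symbol operations: reading, copying or renaming an atom, and writing a clause. I recall Algorithm 2 of Kiesel et al. (2023) as three phases. First, each probabilistic fact $\pi::u$ in $\mathrm{Facts}(\mathcal{P})$ is kept once and shared. Second, each clause $h\leftarrow b_1,\dots,b_n$ in $\mathrm{LP}(\mathcal{P})$ is emitted in a factual copy. Third, each clause is emitted in a counterfactual copy $h^*\leftarrow b_1^*,\dots,b_n^*$: every internal literal is renamed to its starred version and every external literal is left unchanged. For intervened atoms $X\in\mathbf{X}$, the defining clauses of $X^*$ are dropped and replaced by the single fact asserting $X^*:=x$. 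I will assume renaming a single atom costs $O(1)$, for instance by tagging predicate symbols or by hashing.

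For the upper bound, I charge the work clause by clause. A fact costs $O(1)$. A clause with body length $n\le L_{\max}$ costs $O(1+n)=O(L_{\max})$ for each of the factual and counterfactual copies, since every literal is read once and written at most twice. The intervention step either deletes clauses, which costs at most their own length, or adds at most $|\mathbf{X}|\le|\mathfrak{I(P)}|$ unit facts. Each internal atom must be the head of some clause, so $|\mathbf{X}|\le|\mathrm{LP}(\mathcal{P})|$. Summing gives
\begin{equation*}
\sum_{\text{facts}} O(1)+\sum_{C\in \mathrm{LP}(\mathcal{P})} 2\,O(1+|\mathrm{body}(C)|)+O(|\mathbf{X}|)=O(|\mathcal{P}|\cdot L_{\max}),
\end{equation*}
taking $L_{\max}\ge 1$ to absorb the additive constants.

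For the lower bound, I use the size of the output. The algorithm always writes the counterfactual copy of every clause whose head is not intervened, and writing that copy requires writing out its whole body. Any procedure must spend at least as long as the length of what it outputs. So it suffices to exhibit a family of programs on which the output has total length $\Omega(|\mathcal{P}|\cdot L_{\max})$. I take $L_{\max}$ external facts $u_1,\dots,u_{L_{\max}}$ and $m$ rules $h_j\leftarrow u_1,\dots,u_{L_{\max}}$, with an intervention on an atom that occurs in none of these clauses. Then $|\mathcal{P}|=m+L_{\max}$, and the output contains $2m$ clauses of length $L_{\max}$. That gives output length $\Omega(mL_{\max})=\Omega(|\mathcal{P}|\cdot L_{\max})$ whenever $m\ge L_{\max}$. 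This matches the upper bound and yields $\Theta$ as a worst-case bound over such families.

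The main obstacle is making the statement precise rather than the counting itself. The claim as written reads most naturally as a worst-case bound. It would be false as a bound valid for every program, because a program whose clauses all have short bodies except one has total length far below $|\mathcal{P}|\cdot L_{\max}$. I will therefore state explicitly that $\Theta$ refers to the worst case for given $|\mathcal{P}|$ and $L_{\max}$. The sharper per-instance bound is $\Theta(\sum_C(1+|\mathrm{body}(C)|))$, and I would note it as a remark. A second point to fix in writing is the $O(1)$ cost of renaming an atom. If atoms are not given constant-size representations, the renaming cost would add a factor for the size of each atom.
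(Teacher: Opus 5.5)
Your proof is correct. Its upper-bound half is the same per-clause count the paper uses: constant work per fact, and work linear in the body for each factual and counterfactual clause copy.

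Where you differ is the lower bound. The paper obtains $\Theta$ by asserting $\sum_{C}\Theta(|\mathrm{body}(C)|)=\Theta(|\mathrm{LP}(\mathcal{P})|\cdot L_{\max})$. It then adds $\Theta(|\mathrm{Facts}(\mathcal{P})|)$ to conclude $\Theta(|\mathcal{P}|\cdot L_{\max})$. Both identifications are valid only as $O(\cdot)$ bounds. What that calculation actually establishes is your per-instance bound $\Theta\bigl(|\mathrm{Facts}(\mathcal{P})|+\sum_{C}(1+|\mathrm{body}(C)|)\bigr)$. That bound can be far below $|\mathcal{P}|\cdot L_{\max}$, for instance with one long clause among many short ones, or with many facts and few rules. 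Your output-size argument on the family of $m\ge L_{\max}$ rules with full-length bodies supplies the lower bound that the paper's proof does not. The price is reading the theorem as a worst-case statement for given $|\mathcal{P}|$ and $L_{\max}$, which is the reading under which it is true.

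Two small points of alignment with the paper:
\begin{itemize}
\item The paper's \textsc{ConstructTwinNetwork} duplicates each fact as $p::a$ and $p::a'$ and drops \emph{both} copies of any clause with an intervened head. Your recollection of Kiesel et al.\ instead shares facts and keeps the factual copy.
\item The paper's algorithm also intervenes on facts. Your bound $|\mathbf{X}|\le|\mathrm{LP}(\mathcal{P})|$ should therefore become $|\mathbf{X}|\le|\mathcal{P}|$.
\end{itemize}
Neither point changes the asymptotics. Your lower-bound family, which intervenes on an atom absent from the rules, works verbatim under either version.
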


An intervention $do(\textbf{X}:=\textbf{x})$ is applied to the program by modifying the rules in the counterfactual part of the program. Subsequently, a counterfactual query $P_{\mathcal{M}^\mathcal{K}_\mathbf{x}}(Y|\textbf{E}=\textbf{e})$ is evaluated by computing a standard marginal probability on the transformed program $\mathcal{T}(\mathcal{P})$, as shown in Algorithm \ref{alg:twin_query}. The inference complexity itself is determined by an algorithm, such as knowledge compilation, whose runtime is given by a function $g(w(G))$ for a program with primal graph $G$ of treewidth given by $w(G)$ \cite{eiter_treewidth-aware_2021}. A necessary condition for the validity of this approach for a counterfactual queries is the original program $P$ has unique supported models, for which a sufficient condition is an acyclic underlying logic program $\mathrm{LP}(\mathcal{P})$.

\begin{algorithm}[H]
\caption{\textsc{ConstructTwinNetwork}$(\mathcal{P},do(\textbf{X}:=\textbf{x}))$}
\label{alg:twin_construct}
\begin{algorithmic}[1]
\Require ProbLog program $\mathcal{P}=(\mathrm{LP}(\mathcal{P}),\text{Facts}(\mathcal{P}))$; intervention $\textbf{X}:=\textbf{x}$
\Ensure Twin Network program $\mathcal{T}(\mathcal{P})$
\State Initialize $\mathcal{T}(\mathcal{P})\leftarrow \emptyset$
\ForAll{$p::a\in \text{Facts}(\mathcal{P})$}
    \If{$a\in \textbf{X}$}
        \State Add $1.0::a$ and $0.0::a'$ to $\mathcal{T}(\mathcal{P})$
    \Else
        \State Add $p::a$ and $p::a'$ to $\mathcal{T}(\mathcal{P})$
    \EndIf
\EndFor
\ForAll{$h\leftarrow b_1,\dots ,b_n \in \mathrm{LP}(\mathcal{P})$}
    \If{$h\not\in \textbf{X}$}
        \State Add $h\leftarrow b_1,\dots ,b_n$ to $\mathcal{T}(\mathcal{P})$
        \State Add $h'\leftarrow b_1',\dots ,b_n'$ to $\mathcal{T}(\mathcal{P})$
    \EndIf
\EndFor
\State \Return $\mathcal{T}(\mathcal{P})$
\end{algorithmic}
\end{algorithm}

\begin{algorithm}[H]
\caption{\textsc{EvaluateTwinNetworkQuery}$(\mathcal{P},\mathcal{T}(\mathcal{P}),\textbf{X}:=\textbf{x},\textbf{E} = \textbf{e},\phi )$}
\label{alg:twin_query}
\begin{algorithmic}[1]
    \Require Original program $\mathcal{P}$, Twin Network $\mathcal{T}(\mathcal{P})$, intervention $\textbf{X}:=\textbf{x}$, evidence $\textbf{E} = \textbf{e}$, query formula $\phi $
    \Ensure Counterfactual probability $\pi_{\mathcal{P}}(\phi_{\textbf{x}}|\textbf{E} = \textbf{e})$
    \State $p_1\leftarrow \pi_{\mathcal{P}}(\textbf{E} = \textbf{e})$
    \State $p_2\leftarrow \pi_{\mathcal{T}(\mathcal{P})}(\phi'\wedge\textbf{E} = \textbf{e})$
    \State \Return $p_2/p_1$
\end{algorithmic}
\end{algorithm}

\section{Single World Intervention Programs}
\label{sec:swip}
To address both limitations, we introduce our  \emph{Single-World Intervention Fact Transformation} (SWIFT) algorithm detailed in Algorithm \ref{alg:swip_transform} to produce a Single World Intervention Program (SWIP) fit for a counterfactual query given an intervention and evidence. 

The SWIFT algorithm operationalizes the "graph surgery" of Single-World Intervention Graphs \cite{richardson_single_2013}, shown in Figure \ref{fig:swig}, at the level of logical rules. Because rules interact through relational, quantified, and recursive dependencies, altering or deleting a clause may change the support, reachability, and logical structure of downstream atoms. The Twin Network approach avoids this by model duplication, at the cost of additional assumptions and computational overhead. Instead, the SWIFT algorithm provides a rule rewriting procedure that preserves the deterministic closure implied by existing rules and unique supported model requirement by the distribution semantics to produce a program semantically equivalent to the intervened Structural Causal Model.

% \begin{figure}[h]
%     \centering
%     \begin{tikzpicture}[
%     node distance=2cm and 2.5cm,
%     every node/.style={draw, ellipse, minimum size=1.5cm},
%     ->, >=Stealth
%     ]
        
%         % Nodes
%         \node[name=L,shape=swig vsplit,
%             swig vsplit={gap=3pt}]{
%             \nodepart{left}{$L$}
%             \nodepart{right}{$L=l$} };
%         \node[right=of L] (D) {$D(L=l)$};
%         \node[below=of L] (G) {$G$};
%         \node[below=of D] (H) {$H(L=l)$};
        
%         % Edges
%         \draw (G) to[out=90,in=180] (L);
%         \draw (G) -- (D);
%         \draw (G) -- (H);
%         \draw (L) -- (H);
%         \draw (D) -- (H);
%         \draw (L) -- (D);

%     \end{tikzpicture}
    
%     \caption{SWIG where intervention $fix(L=l)$ is acted}
%     \label{fig:swig}
% \end{figure}

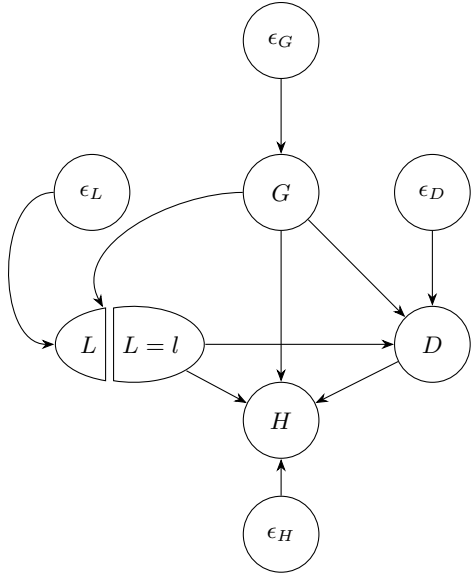
\begin{figure}[h]
    \centering
\begin{tikzpicture}[
    every node/.style={draw, circle, minimum size=1cm, font=\bfseries},
    ->, >=Stealth
]

% Node positions
\node (uL) at (.5,4.5) {$\epsilon_L$};
% \node (L)  at (1,2.5) {$L$};
\node[
    name=L,
    shape=swig vsplit,
    swig vsplit={gap=3pt}
] at (1,2.5) {
    \nodepart{left}{$L$}
    \nodepart{right}{$L=l$}
};
\node (uG) at (3,6.5) {$\epsilon_G$};
\node (G)  at (3,4.5) {$G$};

\node (uD) at (5,4.5) {$\epsilon_D$};
\node (D)  at (5,2.5) {$D$};

\node (uH) at (3,0) {$\epsilon_H$};
\node (H)  at (3,1.5) {$H$};

% Edges
\draw (uL) to[out=180,in=180] (L);
\draw (uD) -- (D);
\draw (uG) -- (G);
\draw (uH) -- (H);

\draw (G) to[out=180,in=120] (L);
\draw (G) -- (D);
\draw (G) -- (H);
\draw (L) -- (H);
\draw (D) -- (H);
\draw (L) -- (D);

\end{tikzpicture}
 \caption{SWIG where intervention $fix(L=l)$ is acted}
    \label{fig:swig}
\end{figure}

To distinguish interventions in the Twin Network setting and the SWIP setting, we use the $fix(\textbf{X}:=\textbf{x})$ notation. This procedure first removes all clauses from $\mathrm{LP}(\mathcal{P})$ that define the intervened propositions in $\textbf{X}$. This step corresponds to severing the causal arrows into the intervened nodes. Second, it iterates through the remaining rules and replaces any occurrence of an intervened atom $X_i\in \textbf{X}$ in a rule body with a new, unique atom $X_{i,fixed}$ that represents its fixed value. This corresponds to redirecting the outgoing causal arrows from the original random node to the new fixed-value node. Finally, it adds deterministic facts to assert the values of these new fixed atoms. In contrast to constructing $\mathcal{T}(\mathcal{P})$, the complexity of SWIFT is at worst proportional to the program size, ie. when an intervened variable appears as an atom in every rule, while for $\mathcal{T}(\mathcal{P})$, it is always proportional. The resulting SWIP, $\mathcal{S}(\mathcal{P})$, is a valid ProbLog program that directly represents the counterfactual world.

\begin{theorem}[SWIP Transformation Complexity]
    Let $\mathcal{P} = (\mathrm{LP}(\mathcal{P}), \mathrm{Facts}(\mathcal{P}))$ be a ProbLog program, and let $|\mathcal{P}| = |\mathrm{LP}(\mathcal{P})| + |\mathrm{Facts}(\mathcal{P})|$ denote the total number of clauses and facts. Let $L_{\max}$ be the maximum body length of any clause in $\mathrm{LP}(\mathcal{P})$. Then the SWIP $\mathcal S(\mathcal{P})$ following Algorithm \ref{alg:swip_transform} has complexity $O(|\mathcal{P}| \cdot L_{\max})$.
\label{thrm:swip_complexity}   
\end{theorem}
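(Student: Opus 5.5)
We bound the cost of each phase of SWIFT (Algorithm \ref{alg:swip_transform}) separately and add them. As in Theorem \ref{thrm:tn_complexity}, the model counts elementary operations on clauses and literals. A membership test ``$a\in\mathbf{X}$'' costs $O(1)$ once the intervened atoms $\mathbf{X}$ are stored in a hash set (or another constant-time dictionary). This preprocessing costs $O(|\mathbf{X}|)$, and $|\mathbf{X}|\le|\mathcal{P}|$ because every intervened atom is defined by some fact or clause of $\mathcal{P}$; otherwise the intervention is vacuous and the atom can be discarded.

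\emph{Phase 1 (fact handling).} We scan $\mathrm{Facts}(\mathcal{P})$ once. Each probabilistic fact $p::a$ either gets copied into $\mathcal{S}(\mathcal{P})$ or, if $a\in\mathbf{X}$, gets dropped. Each decision costs $O(1)$, so the phase costs $O(|\mathrm{Facts}(\mathcal{P})|)$.

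\emph{Phase 2 (clause deletion and body rewriting).} For each clause $h\leftarrow b_1,\dots,b_n$ in $\mathrm{LP}(\mathcal{P})$ we first test $h\in\mathbf{X}$ in $O(1)$ and delete the clause if the test succeeds. Otherwise we traverse the body once and replace every literal over some $X_i\in\mathbf{X}$ by $X_{i,fixed}$, preserving its sign. Each literal costs $O(1)$, so the clause costs $O(1+n)=O(1+L_{\max})$. Summed over all clauses, the phase costs $O(|\mathrm{LP}(\mathcal{P})|\cdot L_{\max})$.

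\emph{Phase 3 (fixed facts).} We add one deterministic fact $X_{i,fixed}$ (or its negation, according to $x_i$) for each $X_i\in\mathbf{X}$, at cost $O(|\mathbf{X}|)\subseteq O(|\mathcal{P}|)$.

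The main obstacle is the final structural-simplification pass that removes redundant or unreachable rules. We would implement it as a single reachability computation on the dependency graph of the rewritten program. The nodes are atoms, and each body literal of each surviving clause contributes one edge, so the graph has at most $|\mathcal{P}|\cdot L_{\max}$ edges. A BFS or DFS from the query and evidence atoms, followed by pruning every clause whose head is not marked, therefore runs in $O(|\mathcal{P}|\cdot L_{\max})$. The same graph view handles cascading simplifications: if removing a clause can make a body atom unsupported, we propagate this with a counter per clause, in the style of linear-time Horn satisfiability (Dowling--Gallier). Each literal is then decremented at most once, which keeps the pass within the same bound.

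Adding the four phases gives $O(|\mathcal{P}|+|\mathcal{P}|\cdot L_{\max})=O(|\mathcal{P}|\cdot L_{\max})$, since $L_{\max}\ge 1$ whenever $\mathrm{LP}(\mathcal{P})$ is nonempty; if $\mathrm{LP}(\mathcal{P})$ is empty the bound reads $O(|\mathcal{P}|)$. This is only an upper bound, not $\Theta$. When the intervened atoms head many clauses, or the query does not reach them, large parts of $\mathrm{LP}(\mathcal{P})$ are deleted without their bodies ever being scanned, so the cost can fall strictly below the Twin Network's $\Theta(|\mathcal{P}|\cdot L_{\max})$ from Theorem \ref{thrm:tn_complexity}.
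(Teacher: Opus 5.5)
Your Phases 1--3 are the paper's argument: it proves the bound by analogy with the Twin Network proof (constant work per fact, $O(n)$ work per surviving clause body, summed to $O(|\mathcal{P}|\cdot L_{\max})$), so your proposal is correct and takes essentially the same route. Algorithm~\ref{alg:swip_transform} has no structural-simplification pass, and it copies $\mathrm{Facts}(\mathcal{P})$ unchanged rather than dropping facts for atoms in $\mathbf{X}$, so your Phase~4 is an unneeded extra step rather than the ``main obstacle''---harmless, since you keep it within the same bound.
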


As a corollary, it immediately follows
\begin{corollary}[Asymptotic Advantage of SWIFT over Twin Networks]
Let $\mathcal{P} = (\mathrm{LP}(\mathcal{P}), \mathrm{Facts}(\mathcal{P}))$ be a ProbLog program of size $|\mathcal{P}|$, and let $L_{\max}$ be the maximum body length of any clause in $\mathrm{LP}(\mathcal{P})$. 
Let $\mathcal{S}(\mathcal{P})$ be the Single-World Intervention Program produced by the SWIFT transformation, and let $\mathcal{T}(\mathcal{P})$ be the Twin Network transformation of $\mathcal{P}$.

Then the time complexity of constructing $\mathcal{S}(\mathcal{P})$ is $O(|\mathcal{P}| \cdot L_{\max})$, while the time complexity of constructing $\mathcal{T}(\mathcal{P})$ is $\Theta(|\mathcal{P}| \cdot L_{\max})$.
Moreover, $\mathcal{S}(\mathcal{P})$ is never asymptotically larger than $\mathcal{T}(\mathcal{P})$, and is strictly smaller whenever at least one intervened atom does not appear in all clause bodies.
\end{corollary}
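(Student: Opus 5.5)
The plan is to assemble the corollary directly from Theorem~\ref{thrm:tn_complexity} and Theorem~\ref{thrm:swip_complexity}, and then prove the size comparison by counting, clause by clause, the work each transformation does and the symbols each output contains. The first claim needs no new work. Theorem~\ref{thrm:swip_complexity} gives the $O(|\mathcal{P}|\cdot L_{\max})$ bound for SWIFT, and Theorem~\ref{thrm:tn_complexity} gives the $\Theta(|\mathcal{P}|\cdot L_{\max})$ bound for $\mathcal{T}(\mathcal{P})$, which I will quote.

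For ``never asymptotically larger'', I will bound the output sizes explicitly. Write $\|\cdot\|$ for total symbol count.
\begin{itemize}
\item Algorithm~\ref{alg:twin_construct} emits two copies of each fact and two copies of each non-intervened clause. Hence $\|\mathcal{T}(\mathcal{P})\| \ge 2\sum_{c\in \mathrm{LP}(\mathcal{P}),\,h(c)\notin\mathbf{X}}(1+|b(c)|) + 2|\mathrm{Facts}(\mathcal{P})|$.
\item SWIFT emits each retained clause at most once, possibly with body atoms renamed to $X_{i,fixed}$, plus $|\mathbf{X}|$ deterministic facts. Hence $\|\mathcal{S}(\mathcal{P})\| \le \sum_{c,\,h(c)\notin\mathbf{X}}(1+|b(c)|) + |\mathrm{Facts}(\mathcal{P})| + |\mathbf{X}|$.
\end{itemize}
Since $|\mathbf{X}|\le|\mathcal{P}|$, this yields $\|\mathcal{S}(\mathcal{P})\|\le \|\mathcal{T}(\mathcal{P})\|$ up to an additive lower-order term, and therefore $\mathcal{S}=O(\mathcal{T})$.

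For the strictness claim, I will read ``strictly smaller'' as a statement about work done in the rewriting pass, since that is the cost model of the paper. The Twin Network must touch every body literal of every clause twice, once in the original and once in the primed copy. SWIFT, after copying, performs a substitution step that costs only on the clauses containing an intervened atom. So the SWIFT cost is $\Theta(|\mathcal{P}| + \sum_{c:\, b(c)\cap\mathbf{X}\neq\emptyset}|b(c)|)$. If some intervened atom is missing from some clause body, then there is at least one clause whose $|b(c)|$ contribution is absent. The sum is then strictly below the Twin Network's $\sum_c 2|b(c)|$. In particular, in the common case where only a vanishing fraction of clauses mention $\mathbf{X}$, the rewriting work is $o(|\mathcal{P}|\cdot L_{\max})$ whenever $L_{\max}$ grows.

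The main obstacle is this last step, namely making ``strictly smaller'' precise. Under a literal symbol count, both outputs are linear in the same quantity, so the strict inequality has to come from one of two sources: the factor $2$ of duplication, which SWIFT never incurs, or the skipped substitution work on clauses that do not mention $\mathbf{X}$. I would state the strictness as $\|\mathcal{S}(\mathcal{P})\| < \|\mathcal{T}(\mathcal{P})\|$. The argument is that the duplication factor already gives strict inequality as soon as some clause has a nonempty body. I would use the hypothesis about intervened atoms to separate the rewriting costs, noting that when every clause body contains an intervened atom the two rewriting costs coincide up to constants.
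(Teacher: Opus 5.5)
The first step, quoting Theorems~\ref{thrm:tn_complexity} and~\ref{thrm:swip_complexity}, is exactly what the paper does. In the paper, those same two bounds also give ``never asymptotically larger'' immediately: a construction cost that is $O(|\mathcal{P}|\cdot L_{\max})$ is $O$ of one that is $\Theta(|\mathcal{P}|\cdot L_{\max})$. Your argument breaks when you switch to output symbol counts. From your bounds, $\|\mathcal{S}(\mathcal{P})\|\le\tfrac12\|\mathcal{T}(\mathcal{P})\|+|\mathbf{X}|$, so you need $|\mathbf{X}|=O(\|\mathcal{T}(\mathcal{P})\|)$. The fact $|\mathbf{X}|\le|\mathcal{P}|$ does not give this, because Algorithm~\ref{alg:twin_construct} emits nothing for clauses whose head is intervened.

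Take $0.5::u$ with $x_i\leftarrow u$ for $i=1,\dots,n$, and fix every $x_i$. Then $\mathcal{T}(\mathcal{P})=\{0.5::u,\;0.5::u'\}$ has constant size, while $\mathcal{S}(\mathcal{P})$ has $n+1$ facts, even though $|\mathbf{X}|=n\le|\mathcal{P}|=n+1$. So under a literal size measure the claim is false for the algorithms as written. (Also, $|\mathbf{X}|$ counts atoms, not clauses and facts, so $|\mathbf{X}|\le|\mathcal{P}|$ itself would need an argument.) You have two ways out:
\begin{enumerate}
\item Argue about construction cost, as the paper does; both costs are $\Theta(n)$ in the example.
\item Charge $\mathcal{T}(\mathcal{P})$ one fixing fact per intervened atom that is not a probabilistic fact, as Algorithm~\ref{alg:twin_construct} already does for intervened probabilistic facts. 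With $\mathbf{X}_F$ denoting the intervened atoms that are probabilistic facts, this gives $\|\mathcal{T}(\mathcal{P})\|-\|\mathcal{S}(\mathcal{P})\|=|\mathrm{Facts}(\mathcal{P})|-|\mathbf{X}_F|+\sum_{c\in\mathrm{LP}_{-\mathbf{X}}}(1+|b(c)|)\ge 0$.
\end{enumerate}

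For strictness, your core idea matches the paper's own informal reasoning in the proof of Theorem~\ref{thrm:swip_complexity}: SWIFT only does work on clauses that mention $\mathbf{X}$, while the twin network touches every retained body twice. Your formalization, however, has three holes.
\begin{enumerate}
\item The cost $\Theta(|\mathcal{P}|+\sum_{c:\,b(c)\cap\mathbf{X}\neq\emptyset}|b(c)|)$ is not the cost of Algorithm~\ref{alg:swip_transform}. Its inner loop visits and copies every literal of every clause in $\mathrm{LP}_{-\mathbf{X}}$. Your formula holds only if untouched clauses are shared by reference and an atom-to-clause occurrence index is available; state that assumption explicitly (the paper makes it silently). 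Without it, SWIFT copies each retained body once where the twin network copies it twice. That is a constant-factor gain, and the corollary's hypothesis plays no role in it.
\item The hypothesis ``some intervened atom is missing from some clause body'' does not give you a clause whose contribution vanishes when $|\mathbf{X}|\ge2$, because that clause may contain a different intervened atom. What your argument actually needs is a retained clause that mentions no atom of $\mathbf{X}$.
\item Your factor-$2$ fallback fails without the fixing-fact charge. Your condition ``some clause has a nonempty body'' holds in the example above, yet there $\|\mathcal{S}(\mathcal{P})\|>\|\mathcal{T}(\mathcal{P})\|$. With the charge, strict inequality holds as soon as some clause survives or some probabilistic fact is not intervened on. This confirms your suspicion that the stated hypothesis is not what drives strictness.
\end{enumerate}
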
 

\begin{algorithm}[H]
\caption{\textsc{SWIFT}$(\mathcal{P},fix(\textbf{X}:=\textbf{x}))$}
\label{alg:swip_transform}
\begin{algorithmic}[1]
\Require ProbLog program $\mathcal{P} =(\mathrm{LP}(\mathcal{P}),\text{Facts}(\mathcal{P}))$; intervention $\textbf{X}:=\textbf{x}$
\Ensure Single-World Intervention Program $\mathcal{S}(\mathcal{P})$
 
\State Initialize $\mathcal{S}(\mathcal{P})\leftarrow \text{Facts}(\mathcal{P})$
\State Let $\mathrm{LP}_{-\textbf{X}}\leftarrow {C\in \mathrm{LP}(\mathcal{P}):head(C)\not\in \textbf{X}}$
\ForAll{$C=(h\leftarrow b_1,\dots ,b_n)\in LP_{-\textbf{X}}$}
    \State Let $B' ={b_1' ,\dots ,b_n'}$ be a new set of body literals
    \ForAll{$b_j \in {b_1 ,\dots ,b_n}$}
        \State Let $a$ be the atom of the literal $b_j$.
        \If{$a\in \textbf{X}$}
            \State $b_j' \leftarrow \text{literal corresponding to } a_{fixed}(x_a) \text{with the same sign as } b_j$.
        \Else
            \State $b_j' \leftarrow b_j$
        \EndIf
    \EndFor
    \State Add the rewritten rule $h\leftarrow B'$ to $\mathcal{S}(\mathcal{P})$
\EndFor
\ForAll{$X_i\in \textbf{X}$}
    \State Add the fact $1.0::X_{i,fixed}(x_i)$ to $\mathcal{S}(\mathcal{P})$
\EndFor
\State \Return $\mathcal{S}(\mathcal{P})$
\end{algorithmic}
\end{algorithm}

That is to say, while asymptotically equivalent in the worst case, SWIPs avoid unconditional duplication and are strictly smaller for sparse interventions. This directly translates to query speed ups in the general case, and in the worst case, is the same cost as querying over $\mathcal{T}(\mathcal{P})$.  As shown in Algorithm \ref{alg:swip_query}, evidence is incorporated by adding facts to $\mathcal{S}(\mathcal{P})$, and the counterfactual probability is obtained via standard marginal inference on this final program.

\begin{algorithm}[H]
\caption{\textsc{EvaluateSWIPQuery}$(\mathcal{S}(\mathcal{P}) ,\textbf{E} = \textbf{e},\phi )$}
\label{alg:swip_query}
\begin{algorithmic}[1]
\Require SWIP $\mathcal{S}(\mathcal{P})$; evidence $\textbf{E} = \textbf{e}$; counterfactual query formula $\phi $
\Ensure Counterfactual probability $\pi_{\mathcal{S}(\mathcal{P})}(\phi |\textbf{E} = \textbf{e})$
\State $p_1\leftarrow \pi_{\mathcal{S}(\mathcal{P})}(\textbf{E} = \textbf{e})$
\State $p_2\leftarrow \pi_{\mathcal{S}(\mathcal{P})}(\phi \wedge\textbf{E} = \textbf{e})$
\State \Return $p_2/p_1$
% \State Let $\mathcal{S}^e(\mathcal{P}) \leftarrow \mathcal{S}(\mathcal{P})\cup \{\text{evidence clauses for }\textbf{E}= \textbf{e}\}$
% \State Compute and return $\pi_{\mathcal{S}^e(\mathcal{P})}(\phi )$
\end{algorithmic}
\end{algorithm}

\begin{theorem}[Inference Complexity Comparison]
Let $g(\cdot)$ be the complexity of some inference algorithm. Then inference complexity for querying over $\mathcal S(\mathcal{P})$ vs $\mathcal T(\mathcal{P})$ is
$$O(g(w(\mathcal S(\mathcal{P})))) \leq O(g(w(\mathcal T(\mathcal{P}))))$$
\label{thrm:inference_complexity}
\end{theorem}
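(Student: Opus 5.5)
The plan is to reduce the inequality to a structural comparison of the primal graphs of $\mathcal S(\mathcal{P})$ and $\mathcal T(\mathcal{P})$, and then use the monotonicity of $g$. Throughout, I take $g$ to be non-decreasing in treewidth, as it is for the treewidth-aware knowledge-compilation bounds of \cite{eiter_treewidth-aware_2021}. Write $G_{\mathcal S}$ and $G_{\mathcal T}$ for the primal graphs of the two ground programs. In each graph the vertices are atoms, and two atoms are adjacent whenever they occur together in some clause, counting the head together with its body literals. The main step is to show that $G_{\mathcal S}$ is isomorphic to a minor of $G_{\mathcal T}$; in fact I expect it to be a subgraph after renaming. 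Treewidth is minor-monotone, so this gives $w(\mathcal S(\mathcal{P})) \le w(\mathcal T(\mathcal{P}))$, and hence $g(w(\mathcal S(\mathcal{P}))) \le g(w(\mathcal T(\mathcal{P})))$. The asymptotic statement follows directly.

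To build the embedding, I will map every atom of $\mathcal S(\mathcal{P})$ into $\mathcal T(\mathcal{P})$ by cases.
\begin{itemize}
\item A probabilistic fact atom $u \in \mathrm{Facts}(\mathcal{P})$ is retained verbatim by Algorithm~\ref{alg:swip_transform}, and it also occurs in $\mathcal T(\mathcal{P})$. Map $u \mapsto u$.
\item A non-intervened internal atom $h$ has as its defining clauses in $\mathcal S(\mathcal{P})$ exactly the rewrites of clauses $h \leftarrow b_1,\dots,b_n$ in $\mathrm{LP}_{-\mathbf X}$. Algorithm~\ref{alg:twin_construct} adds the counterfactual copy $h' \leftarrow b_1',\dots,b_n'$ of each such clause. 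Map $h \mapsto h'$.
\item A fixed atom $X_{i,fixed}$ replaces $X_i$ in the rewritten bodies. In $\mathcal T(\mathcal{P})$ the corresponding position is occupied by $X_i'$, which the intervention sets. Map $X_{i,fixed} \mapsto X_i'$.
\end{itemize}
Under this map, every clause of $\mathcal S(\mathcal{P})$ becomes, literal by literal, the counterfactual copy of some clause of $\mathcal T(\mathcal{P})$. Consequently every edge of $G_{\mathcal S}$ is sent to an edge of $G_{\mathcal T}$, and $G_{\mathcal S}$ is isomorphic to a subgraph of the counterfactual half of $G_{\mathcal T}$ together with the shared exogenous atoms. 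The added facts $1.0::X_{i,fixed}(x_i)$ have no body, so they create no edges.

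Evidence needs a separate argument. Algorithm~\ref{alg:swip_query} attaches the evidence $\mathbf E=\mathbf e$ to $\mathcal S(\mathcal{P})$ itself, whereas Algorithm~\ref{alg:twin_query} attaches it to the factual half of $\mathcal T(\mathcal{P})$. Evidence clamps atoms and does not add clause co-occurrences, so it can be absorbed by restricting or contracting the relevant vertices; both operations are minor operations and do not increase treewidth. I therefore expect the comparison to survive, provided the evidence-and-query formula is encoded as a clause on both sides in the same way.

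I expect the hardest step to be this identification under evidence, together with the ambiguity in how $\mathcal T(\mathcal{P})$ encodes the intervened atom $X_i'$. Algorithm~\ref{alg:twin_construct} as stated treats intervened facts oddly: it adds $1.0::a$ and $0.0::a'$. If instead the clauses for $X_i'$ are simply dropped, $X_i'$ becomes a free atom, but it still occupies the same body positions, so the mapping above still works. My first move would be to fix a canonical encoding of $\mathcal T(\mathcal{P})$ in which every intervened primed atom is a body-free fact, and check that the edge correspondence is exact. If some encoding of $\mathcal T(\mathcal{P})$ merged atoms differently, I would fall back on the minor argument: contract each factual–counterfactual pair of atoms, and observe that the SWIP's edges are a subset of the edges of the contracted graph.
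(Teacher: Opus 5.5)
Your argument is correct, but it is organized differently from the paper's. The paper goes through the original program. It asserts that the primal graph of $\mathcal T(\mathcal{P})$, taken over the endogenous atoms $A\cup A^*$, splits into two disconnected copies, so $w(\mathcal T(\mathcal{P}))=w(\mathcal{P})$. It then argues that SWIFT only deletes edges (the clauses with heads in $\mathbf X$) and renames body atoms, so $w(\mathcal S(\mathcal{P}))\le w(\mathcal{P})$. You skip the intermediate and embed $G_{\mathcal S}$ directly as a subgraph of the counterfactual half of $G_{\mathcal T}$ via $h\mapsto h'$, $X_{i,fixed}\mapsto X_i'$. Because you need only an inequality and only the counterfactual half, your route is insensitive to encoding choices on which the paper's equality quietly depends. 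If the shared exogenous atoms are kept in the primal graph, the two halves are not disconnected. Moreover, Algorithm~\ref{alg:twin_construct} as written drops the $\mathbf X$-headed clauses from the factual half too, so $w(\mathcal T(\mathcal{P}))$ can be strictly below $w(\mathcal{P})$. Your embedding still gives $w(\mathcal S(\mathcal{P}))\le w(\mathcal T(\mathcal{P}))$ in all these cases. In exchange, the paper's route records the independently useful fact that SWIFT never raises treewidth relative to $\mathcal{P}$.

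Two minor remarks. First, Algorithm~\ref{alg:twin_construct} literally adds a separate fact $p::a'$ for each fact $p::a$, so under that reading you should map $u\mapsto u'$ instead of $u\mapsto u$; nothing else changes. Second, your evidence paragraph is not needed, since the statement compares $\mathcal S(\mathcal{P})$ and $\mathcal T(\mathcal{P})$ without $\mathbf E$. That is just as well, because the fallback of contracting each pair $h,h'$ is not a minor operation in general. The atoms $h$ and $h'$ are usually non-adjacent, and identifying non-adjacent vertices can increase treewidth.
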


Because our transformation is proven to yield a program whose semantics are equivalent to the counterfactual distribution of the underlying SCM, it inherits the established consistency with other causal formalisms like CP-logic and LPADs \cite{Kiesel.etal2023,vennekens_cp-logic_2009,vennekens_logic_2004}.

\begin{theorem}[Correctness of SWIP-Based Counterfactual Queries]
Let $P$ be a ProbLog program encoding a structural causal model $M$ with unique supported models. Let $\mathcal S(\mathcal{P}) = \mathrm{SWIFT}(\mathcal{P},\, fix(\bf X:= \bf x))$ be the SWIG-transformed program and let $\mathcal S^e (\mathcal{P})$ be the program augmented with evidence $\bf E= \bf e$. Then for any query $\phi$ under intervention $\bf x$, 
\begin{equation*}
    \pi_{\mathcal S^e (\mathcal{P})}(\phi) = \pi_{\mathcal{M}}(\phi_{\textbf{x}} | \textbf{E} = \textbf{e}).
\end{equation*}
\label{thrm:swip_correctness}
\end{theorem}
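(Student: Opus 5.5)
The plan is to prove the identity world by world: fix a total choice of the probabilistic facts, compare the unique model of $\mathcal S(\mathcal{P})$ with the solution of the intervened model $\mathcal M_{\mathbf x}$, and then obtain the conditional statement by summing over worlds. Because the evidence $\mathbf E=\mathbf e$ is arbitrary (it may include descendants of $\mathbf X$, and the factual $\mathbf X$ itself), everything hinges on reading $\mathbf E$ and $\phi$ in the single world, following the SWIG convention of \cite{richardson_single_2013}.

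\textbf{Reading of the statement.} Every atom of $\mathcal S(\mathcal{P})$ is read as its SWIG-labelled variable.
\begin{itemize}
\item A non-intervened atom $V$ denotes the potential outcome $V(\mathbf x)$. For non-descendants of $\mathbf X$ this coincides with the factual $V$.
\item The fresh atom $X_{i,fixed}$ denotes the fixed node $x_i$.
\item An occurrence of the factual $X_i$ in evidence denotes the factual $X_i$. It is generated by its original mechanism from the exogenous facts; I treat its defining clauses as retained under a separate factual name that no rewritten body refers to, which matches the split node of Figure~\ref{fig:swig}.
\end{itemize}
Under this reading the right-hand side is $\pi_{\mathcal M}(\phi(\mathbf x)\mid \mathbf E(\mathbf x)=\mathbf e)$, the SWIG counterfactual. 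I will state this identification explicitly, since it is exactly what makes the claim well posed for arbitrary evidence.

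\textbf{Step 1 (worlds coincide).} Fix a total choice $\mathbf u$ of $\text{Facts}(\mathcal{P})$. Algorithm~\ref{alg:swip_transform} keeps all facts, so $\mathbf u$ is also a total choice of $\mathcal S(\mathcal{P})$ and has the same weight $\pi(\mathbf u)$ in both programs.
\begin{itemize}
\item \emph{Unique model.} Removing clauses and renaming body atoms to fresh atoms that are defined only by facts cannot create a cycle. Hence the dependency graph of $\mathcal S(\mathcal{P})$ is acyclic whenever that of $\mathrm{LP}(\mathcal{P})$ is, and $\mathcal S(\mathcal{P})$ also has unique supported models \cite{Kiesel.etal2023}. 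Call its model $w_{\mathbf u}$.
\item \emph{Induction.} I show by induction along a topological order of $\mathrm{LP}_{-\mathbf X}$ that for every internal atom $A\notin\mathbf X$, $w_{\mathbf u}(A)$ equals the value of $A$ in the unique solution of $\mathcal M_{\mathbf x}$ at $\mathbf u$. The base case consists of the exogenous facts, which agree. For the inductive step, the clauses for $A$ in $\mathcal S(\mathcal{P})$ are exactly those of $\mathcal{P}$ with each $X_i$ replaced by a literal of the same sign on $X_{i,fixed}$. Since $X_{i,fixed}$ is true with probability $1$ at value $x_i$, each rewritten body evaluates as the original body would with $X_i:=x_i$ and all other parents at their $\mathcal M_{\mathbf x}$-values. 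So the support condition for $A$ reproduces $f_A(\text{pa}(A)_{\mathbf x},U_A)$.
\item \emph{Factual $\mathbf X$.} The factual copy is computed from $\mathbf u$ by the untouched original clauses, so it equals $X_i(\mathbf u)$ in $\mathcal M$.
\end{itemize}

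\textbf{Step 2 (summing over worlds).} By the distribution semantics,
\[
\pi_{\mathcal S^e(\mathcal{P})}(\phi)=\frac{\sum_{\mathbf u:\,w_{\mathbf u}\models\phi\wedge \mathbf E=\mathbf e}\pi(\mathbf u)}{\sum_{\mathbf u:\,w_{\mathbf u}\models \mathbf E=\mathbf e}\pi(\mathbf u)},
\]
which is the ratio computed by Algorithm~\ref{alg:swip_query}. By Step~1 the indicator conditions equal $\mathbb I[\phi(\mathbf x)(\mathbf u)\wedge \mathbf E(\mathbf x)(\mathbf u)=\mathbf e]$ and $\mathbb I[\mathbf E(\mathbf x)(\mathbf u)=\mathbf e]$. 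Since $\pi(\mathbf u)$ is the product of independent fact probabilities, which is the law of $\mathbf U$ in $\mathcal M$, the numerator is $P_{\mathcal M}(\phi(\mathbf x),\mathbf E(\mathbf x)=\mathbf e)$ and the denominator is $P_{\mathcal M}(\mathbf E(\mathbf x)=\mathbf e)$. Their ratio gives the claim, assuming the evidence has positive probability.

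Note that Step~2 uses only the single-world joint law of the SWIG variables. No cross-world independence between the factual and counterfactual copies of descendants is ever invoked, which is the advantage over $\mathcal T(\mathcal{P})$.

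\textbf{Main obstacle.} I expect the difficulty to lie not in the induction but in pinning down the semantics of evidence on descendants of $\mathbf X$ and on the factual $\mathbf X$. Algorithm~\ref{alg:swip_transform} as written deletes the clauses of $\mathbf X$, so I must justify that keeping them under a factual name does not affect any other atom. My first attempt would be a relevance argument: after the rewriting, no body in $\mathrm{LP}_{-\mathbf X}$ mentions $X_i$, so the factual copy is a sink in the dependency graph. Adding or removing its defining clauses therefore changes neither $w_{\mathbf u}$ on the remaining atoms nor any probability not involving $X_i$.
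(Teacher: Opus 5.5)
\textbf{There is a genuine gap.} It sits in your ``reading of the statement'': that step does not interpret the claim, it changes it.

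The right-hand side $\pi_{\mathcal M}(\phi_{\mathbf x}\mid \mathbf E=\mathbf e)$ is the counterfactual given \emph{factual} evidence (abduction on $P(\mathbf U\mid\mathbf E=\mathbf e)$, Section~\ref{subsec:causal_models}). The paper's proof obtains it with two copies of the rules over the same probabilistic facts, $\mathcal P^K=\mathcal P^S\cup\mathcal P^I$. The evidence $\mathbf E^S=\mathbf e$ is evaluated in the untouched source copy. SWIFT is applied only to the interventional copy, and $\phi^I$ is evaluated there. Hence the denominator is $\pi_{\mathcal P}(\mathbf E=\mathbf e)$, and each total choice contributes $[\varepsilon\models_{\mathcal P}\mathbf E=\mathbf e]\cdot[\varepsilon\models_{\mathcal M_{\mathbf x}}\phi]$.

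In your Step~2 both indicators are evaluated in the single intervened world. What you actually establish is therefore $\pi_{\mathcal M}(\phi(\mathbf x)\mid \mathbf E(\mathbf x)=\mathbf e)$, which differs from the theorem as soon as $\mathbf E$ contains a descendant of $\mathbf X$. Take the power-failure program with $fix(a:=\text{false})$, evidence $d$ and query $d$. The theorem's right-hand side is $P(u_b\mid u_a\vee u_b)=2/3$. Your program, which here coincides with the output of Algorithm~\ref{alg:swip_transform}, gives $1$.

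Your Step~1 induction is sound. It is a more careful version of the pointwise step $[\varepsilon\models_{\mathcal P^K_{fix}}\phi^I]=[\varepsilon\models_{\mathcal M_{\mathbf x}}\phi]$ that the paper only asserts for the interventional copy. What is missing is somewhere to hold the factual values of the descendants of $\mathbf X$. Keeping the clauses of $\mathbf X$ under a factual name (your sink argument) covers evidence on $\mathbf X$ itself. But in a single world, once $a$ is fixed, no atom holds the factual value of, e.g., $d$.

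You have two ways to repair this:
\begin{itemize}
\item To prove the stated theorem, place the evidence in an unmodified copy of the rules for the evidence atoms and their ancestors, sharing the probabilistic facts with the fixed copy. This is the paper's $\mathcal P^K$ construction. Your Step~1 then handles the fixed copy, the source copy reproduces $\mathcal M$ trivially, and your Step~2 summation becomes the paper's chain of equalities.
\item Otherwise, state explicitly that you prove the claim only for evidence on non-descendants of $\mathbf X$ (plus $\mathbf X$ itself via your factual copy). There $E(\mathbf x)=E$ pointwise in $\mathbf u$, so the two readings coincide.
\end{itemize}
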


These theorems suggest with respect to program construction, the Twin Network approach unconditionally duplicates the program, regardless of the intervention query while the SWIP approach scales slower with program size and maximum clause body length and at worst case, will incur the same cost as the Twin Network approach. In realistic applications, where programs and evidence sets may be large but interventions focused on a small subset of predicates, SWIPs can take advantage of the locality and specificity of the query structure. As a result, SWIPs naturally favor compact counterfactual representations and furthermore, are a more realistic approach for capturing real world counterfactuals. Beyond the issue of cross-world independence assumptions growing at a doubly exponential rate, \cite{richardson_single_2013} strongly emphasize the assumptions of the Twin Network are, by definition, mutually exclusive and thus, experimentally unverifiable.  SWIGs, and by extension, SWIPs, ability to avoid this is especially salient in  sequentially randomized trials and longitudinal decision problems. In such settings, treatments are assigned over time based on evolving histories, and counterfactual reasoning must respect the temporal and logical structure of these assignments. Richardson and Robins show that Twin Network constructions can induce incorrect independencies in these cases, whereas SWIGs preserve the correct causal structure by explicitly representing interventions as node-splitting operations within a single world \cite{richardson_single_2013}. Our results show that SWIPs inherit this advantage at the level of probabilistic logic programs: interventions modify only the clauses corresponding to treatment assignment mechanisms, while leaving downstream deterministic and probabilistic dependencies intact.

We can characterize these counterfactual independence conclusions in PLP contexts when not using SWIPs and show our method is a more general and robust implementation of the SCM semantics and is consistent with CP-logic over the same set of models as the Twin Network. 
\begin{theorem}[SWIP Consistency with LPAD ]
Let $\mathcal{P}$ be a propositional LPAD-program such that every selection yields a logic program with a unique supported model. Let $\mathbf{X}, \mathbf{E} \subseteq \mathfrak{B}$ be sets of propositions with value assignments $\mathbf{x}$ and $\mathbf{e}$, respectively, and let $\phi$ be a $\mathfrak{P}$-formula.  

Denote by $\pi^{CP}_{\mathcal{P}}(\phi \mid \textbf{E}=\textbf{e}, fix(\textbf{X}:=\textbf{x}))$ the counterfactual probability computed by CP-logic using the fixed-operator semantics on LPADs, and by
\[
\pi^{SWIP}_{\mathrm{Prob}(\mathcal{P})}(\phi \mid \textbf{E}=\textbf{e}, fix(\textbf{X}:=\textbf{x}))
\]
the counterfactual probability induced by the corresponding Single World Intervention Program (SWIP) constructed from $\mathrm{Prob}(\mathcal{P})$.

Then,
\[
\pi^{CP}_{\mathcal{P}}(\phi \mid \textbf{E}=\textbf{e}, fix(\textbf{X}:=\textbf{x}))
\;=\;
\pi^{SWIP}_{\mathrm{Prob}(\mathcal{P})}(\phi \mid \textbf{E}=\textbf{e}, fix(\textbf{X}:=\textbf{x})).
\]
\label{thrm:swip_cp1}
\end{theorem}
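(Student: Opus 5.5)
The plan is to route the equality through the SCM semantics and chain three equalities. The first comes from Kiesel et al.\ (2023): for a propositional LPAD $\mathcal{P}$ in which every selection has a unique supported model, the ProbLog translation $\mathrm{Prob}(\mathcal{P})$ encodes an SCM $\mathcal{M}_{\mathcal{P}}$ whose counterfactual distribution coincides with the CP-logic counterfactual. The second is Theorem~\ref{thrm:swip_correctness}. Concretely, I would establish
\[
\pi^{CP}_{\mathcal{P}}(\phi \mid \textbf{E}=\textbf{e}, fix(\textbf{X}:=\textbf{x}))
= \pi_{\mathcal{M}_{\mathcal{P}}}(\phi_{\textbf{x}} \mid \textbf{E}=\textbf{e})
= \pi^{SWIP}_{\mathrm{Prob}(\mathcal{P})}(\phi \mid \textbf{E}=\textbf{e}, fix(\textbf{X}:=\textbf{x})).
\]

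\emph{Step 1: the encoding.} I would recall the construction of $\mathrm{Prob}(\mathcal{P})$. Each annotated disjunction $h_1{:}p_1;\dots;h_k{:}p_k \leftarrow B$ is replaced by mutually independent probabilistic facts $u_{r,i}$ together with rules $h_i \leftarrow B, \neg u_{r,1},\dots,\neg u_{r,i-1}, u_{r,i}$, with the fact probabilities chosen so that the induced choice distribution equals the LPAD selection distribution. Because every selection yields a logic program with a unique supported model, every truth assignment to $\mathfrak{E}(\mathrm{Prob}(\mathcal{P}))$ yields a unique supported model of $\mathrm{LP}(\mathrm{Prob}(\mathcal{P}))$. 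Hence $\mathrm{Prob}(\mathcal{P})$ satisfies the hypothesis of Theorem~\ref{thrm:swip_correctness}. The induced SCM $\mathcal{M}_{\mathcal{P}}$ has the facts $u_{r,i}$ as exogenous variables and, as structural equation for each internal atom, the Clark completion of its defining rules.

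\emph{Step 2: CP side.} I would invoke Kiesel et al.'s result that the CP-logic fixed-operator counterfactual (abduction over selections, then intervention by removing the rules with heads in $\textbf{X}$ and fixing their values, then prediction) equals $\pi_{\mathcal{M}_{\mathcal{P}}}(\phi_{\textbf{x}}\mid \textbf{E}=\textbf{e})$. The selection posterior given $\textbf{e}$ pushes forward to the posterior over $\mathbf{U}$, and the CP intervention deletes exactly the structural equations of $\textbf{X}$.

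\emph{Step 3: SWIP side.} Theorem~\ref{thrm:swip_correctness} applied to $\mathrm{Prob}(\mathcal{P})$ gives $\pi_{\mathcal{S}^e(\mathrm{Prob}(\mathcal{P}))}(\phi)=\pi_{\mathcal{M}_{\mathcal{P}}}(\phi_{\textbf{x}}\mid \textbf{E}=\textbf{e})$. By definition, the left-hand side is $\pi^{SWIP}_{\mathrm{Prob}(\mathcal{P})}(\phi \mid \textbf{E}=\textbf{e}, fix(\textbf{X}:=\textbf{x}))$, as computed by Algorithm~\ref{alg:swip_query}. Chaining this with Step 2 closes the argument.

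\emph{Main obstacle.} I expect the main obstacle to be making sure both sides refer to the same intervened model, with matching deletion and fixing of clauses. In CP-logic, $fix$ removes every LPAD rule having an $X\in\textbf{X}$ among its heads. An annotated disjunction with several heads, only some of them intervened, is the delicate case. Under $\mathrm{Prob}$, SWIFT deletes only the translated rules with head in $\textbf{X}$, keeps the sibling heads' rules along with their shared $u_{r,i}$, and redirects body occurrences of $X$ to $X_{fixed}$. I would check rule by rule that the resulting supported model equals the one obtained from the CP intervention applied to the selection. That is, for each fixed exogenous assignment, I would show the two deterministic programs have identical unique supported models by induction along the acyclic (or, more generally, uniquely supported) dependency order. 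Since the exogenous posterior is identical on both sides, summing over assignments then yields the equality.
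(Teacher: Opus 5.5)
Your proposal is correct and follows essentially the paper's argument. The paper rewrites the CP-logic fixed-operator counterfactual (B2) as a sum of $\pi(\mathcal{E}\mid \mathbf{E}=\mathbf{e})$ over possible worlds of $\mathrm{Prob}(\mathcal{P})$, using Riguzzi's translation theorem, the Vennekens leaf--selection lemma and Lemma~\ref{lem:fix_cp}, and then appeals to the proof of Theorem~\ref{thrm:swip_correctness}; this is your chain through $\pi_{\mathcal{M}_{\mathcal{P}}}(\phi_{\mathbf{x}}\mid\mathbf{E}=\mathbf{e})$. Your per-world supported-model check is the content of Lemma~\ref{lem:fix_cp}, which the paper adds because Kiesel et al.\ only treat the do-operator, and it goes through because the CP intervention acts on the selected program $\mathcal{P}^{\sigma}$, removing only the rule whose selected head is $X$ rather than the whole annotated disjunction, so describe CP-logic's $fix$ that way rather than as deleting every LPAD rule with $X$ among its heads.
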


\begin{theorem}[Consistency of SWIPs with CP-Logic]
Let $\mathcal{P}$ be a ProbLog program with unique supported models, and let $\mathbf{X}, \mathbf{E} \subseteq \mathfrak{B}$ with value assignments $\mathbf{x}$ and $\mathbf{e}$. For any $\mathfrak{P}$-formula $\phi$, denote by
\[
\pi^{CP}_{\mathrm{LPAD}(\mathcal{P})}(\phi \mid \textbf{E}=\textbf{e}, fix(\textbf{X}:=\textbf{x}))
\]
the counterfactual probability defined via CP-logic on the LPAD-transformation of $\mathcal{P}$, and by
\[
\pi^{SWIP}_{\mathcal{P}}(\phi \mid \textbf{E}=\textbf{e}, fix(\textbf{X}:=\textbf{x}))
\]
the counterfactual probability computed by the SWIP semantics directly on $\mathcal{P}$.

Then,
\[
\pi^{CP}_{\mathrm{LPAD}(\mathcal{P})}(\phi \mid \textbf{E}=\textbf{e}, fix(\textbf{X}:=\textbf{x}))
\;=\;
\pi^{SWIP}_{\mathcal{P}}(\phi \mid \textbf{E}=\textbf{e}, fix(\textbf{X}:=\textbf{x})).
\]
\label{thrm:swip_cp2}
\end{theorem}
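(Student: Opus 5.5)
The plan is to reduce this theorem to Theorem~\ref{thrm:swip_cp1} by passing through the standard LPAD-transformation of a ProbLog program, and to check that the SWIP built on the round-tripped program coincides with the SWIP built directly on $\mathcal{P}$.

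\textbf{Step 1: set up the round trip.} Let $\mathrm{LPAD}(\mathcal{P})$ be obtained by turning each probabilistic fact $\pi::a$ into the annotated disjunction $a:\pi \leftarrow$ and each rule $h \leftarrow b_1,\dots,b_n$ into $h:1 \leftarrow b_1,\dots,b_n$. Each selection of $\mathrm{LPAD}(\mathcal{P})$ then corresponds to a truth assignment to $\mathrm{Facts}(\mathcal{P})$, and the induced logic program is $\mathrm{LP}(\mathcal{P})$ together with the selected true facts. Since $\mathcal{P}$ has unique supported models, every selection has a unique supported model. So $\mathrm{LPAD}(\mathcal{P})$ meets the hypotheses of Theorem~\ref{thrm:swip_cp1}, and that theorem gives
\[
\pi^{CP}_{\mathrm{LPAD}(\mathcal{P})}(\phi \mid \mathbf{E}=\mathbf{e}, fix(\mathbf{X}:=\mathbf{x})) = \pi^{SWIP}_{\mathrm{Prob}(\mathrm{LPAD}(\mathcal{P}))}(\phi \mid \mathbf{E}=\mathbf{e}, fix(\mathbf{X}:=\mathbf{x})).
\]

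\textbf{Step 2: compare the two SWIPs.} It remains to show that $\pi^{SWIP}_{\mathrm{Prob}(\mathrm{LPAD}(\mathcal{P}))}$ equals $\pi^{SWIP}_{\mathcal{P}}$. The back-translation $\mathrm{Prob}$ introduces fresh choice atoms for each annotated disjunction. Applied to single-head disjunctions, these are a fact $\pi::c_a$ with rule $a \leftarrow c_a$, and a certain choice for each deterministic rule. So $\mathrm{Prob}(\mathrm{LPAD}(\mathcal{P}))$ need not be syntactically equal to $\mathcal{P}$. It is, however, equivalent up to these auxiliary atoms: there is a bijection of total choices that preserves weights, and the unique supported models agree on $\mathfrak{B}$. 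I will show that SWIFT commutes with this equivalence. SWIFT (Algorithm~\ref{alg:swip_transform}) acts only on clauses whose head lies in $\mathbf{X}$ and on body occurrences of atoms in $\mathbf{X}$. Auxiliary atoms are never in $\mathbf{X}$, because $\mathbf{X} \subseteq \mathfrak{B}$.

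Deleting the clauses with head in $\mathbf{X}$ in either program leaves the auxiliary choice facts for those heads orphaned. Those facts are then irrelevant to every $\mathfrak{B}$-formula, so marginalising them out changes nothing. Renaming body occurrences to $X_{i,fixed}$ and adding the certain facts is identical in both programs. Hence the two SWIPs are again equivalent up to auxiliary atoms. Because $\phi$ and the evidence mention only atoms of $\mathfrak{B}$ and fixed atoms, the quotients computed by Algorithm~\ref{alg:swip_query} coincide. Chaining this with Step 1 gives the claim.

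\textbf{Main obstacle.} I expect the hardest part to be making the phrase ``equivalent up to auxiliary atoms'' precise and showing it survives both conditioning on $\mathbf{E}=\mathbf{e}$ and the SWIFT surgery. This needs some care when an atom in $\mathbf{X}$ is itself a probabilistic fact of $\mathcal{P}$, because then it has no defining rule in $\mathcal{P}$ but does have one, namely $a \leftarrow c_a$, in $\mathrm{Prob}(\mathrm{LPAD}(\mathcal{P}))$. I would handle this as follows. In that case I would treat the fact $\pi::a$ as its defining clause, so SWIFT removes it, and check that the surviving $c_a$ is an isolated independent fact whose marginal integrates out to 1. The rest is a world-by-world bijection argument using the distribution semantics sum $\pi_{\mathcal{P}}(\phi)=\sum_{\omega\models\phi}\pi_{\mathcal{P}}(\omega)$.
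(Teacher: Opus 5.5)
Your route differs from the paper's. The paper never forms $\mathrm{Prob}(\mathrm{LPAD}(\mathcal{P}))$ and does not invoke Theorem~\ref{thrm:swip_cp1}. It argues directly on $\mathrm{LPAD}(\mathcal{P})$ using three ingredients:
\begin{itemize}
\item Theorem~\ref{thrm:riguzzi_2}: positive-probability selections of $\mathrm{LPAD}(\mathcal{P})$ correspond, with equal weights, to possible worlds of $\mathcal{P}$;
\item part~(2) of Lemma~\ref{lem:fix_cp}: the fix-surgery on $\mathrm{LPAD}(\mathcal{P})^\sigma$ agrees with the surgery on the ProbLog side conjoined with $\mathcal{E}(\sigma)$;
\item Vennekens' leaf--selection lemma.
\end{itemize}
With these it rewrites (B2) as the sum of $\pi(\mathcal{E}\mid\mathbf{E}=\mathbf{e})$ over worlds of $\mathcal{P}$ where the evidence holds before the surgery and $\phi$ holds after it. It then appeals to the correctness argument (proof of Theorem~\ref{thrm:swip_correctness}). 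Your reduction buys modularity: the theorem becomes a corollary of Theorem~\ref{thrm:swip_cp1} plus a reusable lemma that SWIFT respects equivalence up to fresh auxiliary atoms. But the round trip creates two obligations that the direct route never faces, and as written neither is discharged.

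\textbf{First obligation: the hypothesis transfer in Step~1 is false in general.} A selection of $\mathrm{LPAD}(\mathcal{P})$ may assign $\bot$ to a clause $\mathrm{head}(LC):1\leftarrow\mathrm{body}(LC)$. Such selections have probability $0$, but they are still selections, and they yield $\mathrm{LP}(\mathcal{P})$ with clauses deleted. Having unique supported models is not preserved under deleting clauses. For example, take $0.5::q$ with $\mathrm{LP}(\mathcal{P})=\{p\leftarrow\neg p,\ p\leftarrow q,\ p\leftarrow\neg q\}$. This program has unique supported models, since the completion forces $p$. Yet the zero-probability selection that drops $p\leftarrow\neg q$ and leaves $q$ false yields a program with no supported model. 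So Theorem~\ref{thrm:swip_cp1} cannot be applied as a black box. You need either acyclicity of $\mathrm{LP}(\mathcal{P})$, which is hereditary under deleting clauses, or to reopen that theorem's proof and check that only positive-probability selections matter. The latter largely gives back the paper's direct argument.

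\textbf{Second obligation: the probabilistic-fact case.} Your handling of a fact $\pi::a$ with $a\in\mathbf{X}$ is not a check but a change of definition. Algorithm~\ref{alg:swip_transform} initializes $\mathcal{S}(\mathcal{P})$ with all of $\mathrm{Facts}(\mathcal{P})$, so $a$ keeps probability $\pi$ in $\mathcal{S}(\mathcal{P})$. In $\mathcal{S}(\mathrm{Prob}(\mathrm{LPAD}(\mathcal{P})))$, by contrast, the clause $a\leftarrow h_1^{RC}$ is deleted and $a$ is false in every world. So literal SWIFT does not commute with the round trip. The two SWIPs disagree whenever $\phi$ or $\mathbf{E}$ mentions $a$, for instance evidence on the intervened atom itself. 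Removing $\pi::a$ does match the introduction's informal description (``SWIP removes the probabilistic fact''). You should therefore either adopt that convention explicitly as part of the definition of $\pi^{SWIP}_{\mathcal{P}}$, or restrict $\phi$ and $\mathbf{E}$ to avoid such atoms.

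A smaller point: in the actual encoding, SWIFT deletes only $h\leftarrow h_1^{RC}$. The rule $h_1^{RC}\leftarrow\mathrm{body}',u_1(RC)$ survives but is unused. This is harmless, but it is part of what you marginalise away, not just the choice fact.
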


For example, consider a simple power failure system that involves a deterministic relationship that creates a logical constraint.  Let $A$ be the main power supply, $B$ the independent backup power supply, $C$ an indicator for if the system is on, and $D$ a deterministic report filed iff $C$ is true. As a ProbLog program, $\mathcal{P}$: 
\begin{verbatim}
% Exogenous variables for the two power supplies
0.5::u_a
0.5::u_b

% Endogenous variables defined by structural rules
a :- u_a.
b :- u_b.
c :- a.
c :- b.
d :- c.
\end{verbatim}

The atom $a$ represents the main power being active, $b$ represents the backup being active, $c$ represents the system being on, and $d$ represents the report being filed. The rule $\texttt{d :- c}$ establishes the deterministic link. Now consider the query, "Given we know the back up generator was on, is the report being filed independent of the main generator?" or formally, $D\bot A| B=1$. According to standard do-calculus on the Twin Network of this program, we see that $D$ is not independent of $A$ but of course, the status of $A$ is no longer relevant as $B=1$ fully informs us of $D$. Indeed, in the ProbLog d-separation metainterpreter by \cite{ruckschlos_subtlety_nodate}, the twin network program will not reveal this dependency but it is trivially identified in the SWIP. 

The SWIG literature emphasizes that single-world node-splitting correctly exposes many counterfactual conditional independencies that cannot be read from a naive multi-world duplication via \textit{d}-separation without further assumptions \cite{richardson_single_2013}. All \textit{d}-separation claims in this paper are evaluated on the grounded dependency graph corresponding to the induced Structural Causal Model of the ProbLog program. Concretely, this graph is obtained by grounding the program and interpreting probabilistic facts as mutually independent exogenous variables and logical clauses as directed functional dependencies between endogenous variables. The following formal statement makes this concrete in the ProbLog setting and generalizes the power failure example.

\begin{theorem}[Misidentified Independencies in Twin Network Programs]
\label{thm:twin-fail}
Let $\mathcal P$ be a ProbLog program encoding an SCM $\mathcal{M} = \langle \mathbf U, \mathbf V, \mathcal{F} \rangle$. Suppose there exist distinct endogenous atoms $A,B,D\in\mathbf V$ such that:

\begin{enumerate}[leftmargin=2em, label=(\arabic*)]
    \item $D$ is defined in $\mathrm{LP}(\mathcal P)$ by a set of clauses whose combined effect is a deterministic function $d = g(a,b,\mathbf{u}_D)$ (possibly expressed via multiple rules), where $\mathbf{u}_D$ denotes the exogenous input(s) relevant to $D$;
    \item there exists a value $b^\star$ for $B$ with the \emph{screening property}
    $$
      g(a,b^\star,u_D) = g(a',b^\star,u_D) \quad\ \forall a,a' \text{ and all } \mathbf{u}_D,
    $$
    i.e. when $B=b^\star$ the value of $D$ is pointwise independent of $A$ given the same exogenous input $u_D$;
    \item $A$ and $B$ are not d-separated by the empty set in the causal graph underlying $\mathcal P$ and the program satisfies unique supported model conditions so the ProbLog semantics is well defined.
\end{enumerate}

Then the following hold:
\begin{enumerate}[leftmargin=2em, label=(\arabic*)]
  \item[(a)] On the SWIP for the intervention $\do(B{=}b^\star)$ the node $D_{b^\star}$ is d-separated from $A$ (possibly conditioning on nothing or on appropriate observed variables), and hence the SWIP implies the counterfactual independence $D \perp A|B=b^* $
  \item[(b)] There exist programs $\mathcal P$ satisfying (1) - (3) for which the Twin Network construction $\mathcal T(\mathcal P)$ does not d-separate the factual atom $A$ from the counterfactual copy $D^\ast$. Consequently, ordinary \textit{do}-calculus on $\mathcal T(\mathcal P)$ will not soundly identify the independence. 
\end{enumerate}
\end{theorem}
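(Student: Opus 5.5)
We prove (a) by inspecting the SWIFT output directly, and (b) by exhibiting the power-failure program of the preceding example as a witness.

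\emph{Part (a).} Apply Algorithm~\ref{alg:swip_transform} with $fix(B:=b^\star)$. Every clause with head $B$ is deleted. Every body occurrence of $B$ in the remaining clauses, in particular in the clauses defining $D$, is replaced by the fresh atom $B_{fixed}(b^\star)$, which is asserted by the deterministic fact $1.0::B_{fixed}(b^\star)$.

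In the grounded dependency graph of $\mathcal S(\mathcal P)$, the parents of $D_{b^\star}$ are then three kinds of node: the constant node $B_{fixed}$, the node $A$ (or $A_{b^\star}$ if $A$ is a descendant of $B$), and the exogenous atoms $\mathbf u_D$. The key step is semantic rather than purely graphical. By the screening property (2), the combined clauses for $D$ in $\mathcal S(\mathcal P)$ compute $g(a,b^\star,\mathbf u_D)$. This equals some function $\tilde g(\mathbf u_D)$ that does not depend on $a$.

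We then simplify the grounded program: delete the literals on $A$ in the clauses for $D$, so that the function computed is unchanged. Because $B_{fixed}$ is a fixed constant, the edge $A\to D_{b^\star}$ is redundant. After removing it, $D_{b^\star}$ depends only on $\mathbf u_D$. By the unique supported model assumption (3), this simplified program induces the same distribution as $\mathcal S(\mathcal P)$.

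It remains to check that $A$ is d-separated from $D_{b^\star}$. The exogenous atoms are mutually independent and are roots of the graph. Every remaining path from $A$ to $D_{b^\star}$ enters $D_{b^\star}$ through some $u\in\mathbf u_D$. Since $u$ is a root, such a path is $A \cdots \to X \leftarrow u \to D_{b^\star}$, which contains a collider at $X$. With empty conditioning set, or with a conditioning set containing no descendant of that collider, the path is blocked. If observed variables descend from such a collider, we add to the conditioning set the $\mathbf u_D$-mediated nodes (the "appropriate observed variables" of the statement).

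Finally, Theorem~\ref{thrm:swip_correctness} identifies the marginals of $\mathcal S(\mathcal P)$ with the counterfactual distribution. Together with the global Markov property of the acyclic grounded graph, this yields $D\perp A\mid B=b^\star$.

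\emph{Part (b).} Use the power-failure program: $a\leftarrow u_a$, $b\leftarrow u_b$, $c\leftarrow a$, $c\leftarrow b$, $d\leftarrow c$.
\begin{enumerate}[leftmargin=2em, label=(\arabic*)]
  \item Condition (1) holds with $g(a,b)=a\vee b$, taking $\mathbf u_D=\emptyset$.
  \item Condition (2) holds with $b^\star=1$.
  \item For condition (3), $A$ and $B$ are not d-separated by the empty set. We make this hold by adding a common exogenous cause, e.g.\ $a\leftarrow u_{ab}$ and $b\leftarrow u_{ab}$; this preserves (1) and (2). Acyclicity gives unique supported models.
\end{enumerate}
In $\mathcal T(\mathcal P)$ under Algorithm~\ref{alg:twin_construct}, the counterfactual atoms $c', d'$ still depend on $a'$, and $a'\leftarrow u_a$ is shared with the factual $a\leftarrow u_a$. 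This gives the open path $A\leftarrow u_a\to A'\to C'\to D'$ (and likewise through $u_{ab}$). Ordinary d-separation on the duplicated graph ignores the functional screening of $b'=1$, so $A$ and $D^\ast$ are not d-separated, and \emph{do}-calculus there cannot certify the independence.

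The main obstacle is in (a): justifying that d-separation may be read off the graph after removing the functionally redundant edge $A\to D_{b^\star}$. I would argue this by showing that the simplified program and $\mathcal S(\mathcal P)$ have identical supported models for every exogenous assignment, hence identical distributions, so that independence claims transfer.
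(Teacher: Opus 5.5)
The paper gives no proof of this theorem, only the informal power-failure discussion, so your argument has to stand on its own. Part (a) does not.

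\textbf{The failing step in (a).} The step is ``since $u$ is a root, such a path is $A\cdots\to X\leftarrow u\to D_{b^\star}$''. The child $X$ of $u$ on the path is a collider only if the next edge points into $X$. If some $u\in\mathbf u_D$ is an ancestor of $A$, the path $A\leftarrow\cdots\leftarrow u\to D_{b^\star}$ has no collider and is open.

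This cannot be patched, because (a) is false under (1)--(3) alone. Take $0.5::u$, $0.5::w$, $a\leftarrow u$, $a\leftarrow w$, $b\leftarrow\neg w$, $d\leftarrow a,\neg b$, $d\leftarrow u,b$.
\begin{itemize}
\item Condition (1) holds with $g(a,b,u)=(a\wedge\neg b)\vee(u\wedge b)$ and $\mathbf u_D=\{u\}$.
\item Condition (2) holds with $b^\star=1$, since $g(a,1,u)=u$.
\item Condition (3) holds because $A\leftarrow w\to B$, and the program is acyclic.
\end{itemize}
In the SWIP, $D_{b^\star}=u$ and $A=u\vee w$. Hence $P(D_{b^\star}{=}1,A{=}1)=1/2\neq 3/8=P(D_{b^\star}{=}1)P(A{=}1)$. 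Conditioning on the factual $B=1$ (i.e.\ $w=0$) even gives $A=D_{b^\star}$. No observed set blocks $A\leftarrow u\to D_{b^\star}$, since $u$ is latent. Routing $u$ through an intermediate atom $e\leftarrow u$ that feeds both $a$ and $d$ gives the same failure with unshared noise.

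You need an extra hypothesis. For example, require that no atom of $\mathbf u_D$ is an ancestor of $A$ (or of $A_{b^\star}$), and also not of $B$ if ``$\mid B=b^\star$'' conditions on the factual $B$. Then $D_{b^\star}=\tilde g(\mathbf u_D)$ and $A$ are functions of disjoint sets of independent facts, and the independence follows directly.

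\textbf{The separation is not read off the SWIFT output.} For your own witness, Algorithm~\ref{alg:swip_transform} yields $c\leftarrow a$, $c\leftarrow b_{fixed}$, $d\leftarrow c$. So $A\to C\to D_{b^\star}$ survives, and $A$, $D_{b^\star}$ are d-connected there. Your literal deletion does not reach this edge, because the $A$-literal sits in a clause for $c$, not $d$.
\begin{itemize}
\item Deleting $A$-literals works when they occur directly in $D$'s own clauses: the result is $g(1,b^\star,\cdot)\vee g(0,b^\star,\cdot)=\tilde g$.
\item It can fail through intermediate atoms. With $d\leftarrow c,e$, $c\leftarrow a$, $c\leftarrow u$, $e\leftarrow\neg a$, $e\leftarrow u$, $e\leftarrow\neg b$, one has $g(a,1,u)=u$, yet after deletion $d$ is always true.
\item Intermediate atoms may also feed other heads, so editing them changes other parts of the program.
\end{itemize}
The safe move is a fresh definition of $D$ that computes $\tilde g(\mathbf u_D)$ directly.

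Moreover, the same simplification applied to the counterfactual copy in $\mathcal T(\mathcal P)$ deletes $A'\to C'$ and d-separates $A$ from $D^\ast$ in your witness. So (a) should be stated for the post-processed program. The contrast with (b) then comes from that post-processing, not from the single-world construction.

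\textbf{Part (b).} Your $u_{ab}$ repair is needed and correct: in the paper's example the only $A$--$B$ path is the collider $A\to C\leftarrow B$, so (3) fails there. However, Algorithm~\ref{alg:twin_construct} as printed adds both $p::a$ and $p::a'$ and primes every body atom, which disconnects the two worlds. Your path $A\leftarrow u_a\to A'\to C'\to D^\ast$ exists only in the shared-noise twin network described in the prose and Figure~\ref{fig:twin}. State explicitly that this is the construction you use.
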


In the power failure example, it is trivial to see this is a result of the direct dependency between $D$ and $C$. However, the value of a counterfactual program transformation which is a complete independence oracle is highlighted by condition (1) of Theorem \ref{thm:twin-fail}, where $D$ may be expressed, unobviously, deterministically as a composition of multiple rules. The above results demonstrate the failure of the Twin Network is not a PLP issue nor can PLP alone reveal independencies in program structure when interventions are implemented across worlds \cite{ruckschlos_subtlety_nodate, R_ckschlo__2023}. SWIPs, by design, align functional causal semantics with the interventional theory of CP-Logic while avoiding independence pathologies. 

% See Appendix \ref{app:dsep} for more details.

% \begin{algorithm}[H]
% \caption{\textsc{SWIFT}$(\mathcal{P},fix(\textbf{X}:=\textbf{x}))$}
% \label{alg:swip_transform}
% \begin{algorithmic}[1]
% \Require ProbLog program $\mathcal{P} =(\mathrm{LP}(\mathcal{P}),\text{Facts}(\mathcal{P}))$; intervention $\textbf{X}:=\textbf{x}$
% \Ensure Single-World Intervention Program $\mathcal{S}(\mathcal{P})$
 
% \State Initialize $\mathcal{S}(\mathcal{P})\leftarrow \text{Facts}(\mathcal{P})$
% \State Let $\mathrm{LP}_{-\textbf{X}}\leftarrow {C\in \mathrm{LP}(\mathcal{P}):head(C)\not\in \textbf{X}}$
% \ForAll{$C=(h\leftarrow b_1,\dots ,b_n)\in LP_{-\textbf{X}}$}
%     \State Let $B' ={b_1' ,\dots ,b_n'}$ be a new set of body literals
%     \ForAll{$b_j \in {b_1 ,\dots ,b_n}$}
%         \State Let $a$ be the atom of the literal $b_j$.
%         \If{$a\in \textbf{X}$}
%             \State $b_j' \leftarrow \text{literal corresponding to } a_{fixed}(x_a) \text{with the same sign as } b_j$.
%         \Else
%             \State $b_j' \leftarrow b_j$
%         \EndIf
%     \EndFor
%     \State Add the rewritten rule $h\leftarrow B'$ to $\mathcal{S}(\mathcal{P})$
% \EndFor
% \ForAll{$X_i\in \textbf{X}$}
%     \State Add the fact $1.0::X_{i,fixed}(x_i)$ to $\mathcal{S}(\mathcal{P})$
% \EndFor
% \State \Return $\mathcal{S}(\mathcal{P})$
% \end{algorithmic}
% \end{algorithm}

\section{Experiments}
\label{sec:exp}

We've established that SWIPs can be used for counterfactual queries by reducing them to marginal inference and carry out this query via SharpSAT, a top down Knowledge Compilation \cite{korhonen_integrating_2021}. To assess the scalability and efficiency of our SWIP counterfactual inference method, we replicate and extend the experimental setup from \cite{kiesel_what_2023}. 
Our evaluation focuses on three primary questions: (i) how counterfactual program size varies with the SWIP approach and the Twin Network approach, (ii) how inference time varies with program size and structural complexity, and (iii) how inference time is affected by the number and type of evidence and intervention atoms.  

\myparagraph{Benchmark Instances}
Following \cite{kiesel_what_2023}, we generate acyclic directed graphs (DAGs) with a controlled size and treewidth. Each instance corresponds to a random probabilistic logic program modeling reachability in a directed graph. 
For a given graph $G=(V,E)$ with distinguished start and goal nodes $s,g\in V$, we encode the probability of reaching $g$ from $s$ using the following ProbLog schema:
\\
\begin{verbatim}
r(s).
0.1::trap(Y) :- p(X,Y).
r(Y) :- p(X,Y).
1/d(X)::p(X, s 1(X));...;1/d(X)::p(X, s d(X)):- r(X), \+ trap(X).
\end{verbatim}
Here, $d(X)$ denotes the out-degree of vertex $X$, and $\texttt{s\_i}(X)$ denotes its $i$-th child node. 
The resulting program represents the random process of traversing the graph from $s$ to $g$, avoiding nodes marked as traps.  We vary two parameters controlling instance difficulty: the number of vertices $n$ and the treewidth $k$. 
We first generate a random tree of size $n$ using the \texttt{networkx} library (which has treewidth $1$), and then add $k$ additional nodes, each connected by incoming arcs from randomly selected original nodes. 
Finally, a single goal vertex $g$ is added, receiving edges from each of the $k$ new nodes. 
This procedure yields a DAG of size $n+k+1$ with treewidth $\min(n,k)$ and ensures acyclicity. 
For each instance, we sample up to five pieces of evidence and five interventions, allowing us to examine the interaction between query complexity and inference performance.
Counterfactual queries are defined over this model by introducing positive or negative evidence on intermediate reachability predicates and positive or negative interventions on selected edges, following the schema:
$
\pi^{\mathcal{M}}_{\mathcal{P}}\big(r(g)\,|\,\neg r(v_1), \ldots, \neg r(v_n), do(\neg r(v'_1)), \ldots, do(\neg r(v'_m))\big),
$
for some evidence nodes $v_1,\ldots,v_n$ and intervened nodes $v'_1,\ldots,v'_m$. 

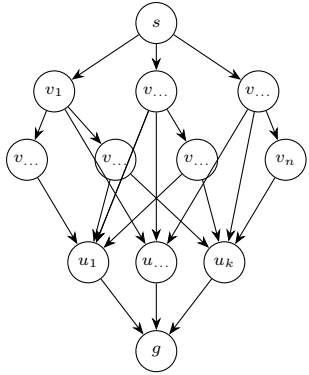
\begin{figure}[ht]
\centering
\begin{tikzpicture}[
  scale=0.9,
  every node/.style={transform shape},
  node/.style={circle,draw,minimum size=6mm,inner sep=0pt,font=\scriptsize},
  tree/.style={node},
  added/.style={node},
  start/.style={node},
  goal/.style={node},
  edge/.style={->, >=Stealth, thin},
  guide/.style={- , gray, thin}
]

% --- Top: Random tree ---
\node[start] (s) at (0,1.6) {$s$};
\node[tree] (t1) at (-1.5,0.6) {$v_1$};
\node[tree] (t2) at (0,0.6) {$v_{\dots}$};
\node[tree] (t3) at (1.5,0.6) {$v_{\dots}$};
\node[tree] (t4) at (-1.9,-0.4) {$v_{\dots}$};
\node[tree] (t5) at (-0.6,-0.4) {$v_{\dots}$};
\node[tree] (t6) at (0.6,-0.4) {$v_{\dots}$};
\node[tree] (t7) at (1.9,-0.4) {$v_n$};

\draw[edge] (s) -- (t1);
\draw[edge] (s) -- (t2);
\draw[edge] (s) -- (t3);
\draw[edge] (t1) -- (t4);
\draw[edge] (t1) -- (t5);
\draw[edge] (t2) -- (t6);
\draw[edge] (t3) -- (t7);

% --- Middle: Dense layer (below tree) ---
\node[added] (a1) at (-1.0,-1.9) {$u_1$};
\node[added] (a2) at (0.0,-1.9)  {$u_{\dots}$};
\node[added] (a3) at (1.0,-1.9)  {$u_k$};

% edges from tree nodes to dense-layer nodes
\foreach \x in {t4,t5,t6} \draw[edge] (\x) -- (a1);
\foreach \x in {t1,t2,t3} \draw[edge] (\x) -- (a2);
\foreach \x in {t5,t6,t7} \draw[edge] (\x) -- (a3);
\draw[edge] (t2) -- (a1);
\draw[edge] (t2) -- (a1);
\draw[edge] (t3) -- (a3);

% --- Bottom: Goal node (below dense layer) ---
\node[goal] (g) at (0,-3.2) {$g$};
\draw[edge] (a1) -- (g);
\draw[edge] (a2) -- (g);
\draw[edge] (a3) -- (g);

% % --- Right-side layer labels with small guide lines ---
% \node[align=left,font=\small] (lbl_tree) at (-4,1.2) {\textbf{Random tree}\$size $n$, root $s$)};
% \draw[guide] (t2.east) .. controls +(0.6,0.1) and +(-0.3,0.4) .. (lbl_tree.west);

% \node[align=left,font=\small] (lbl_dense) at (-4,-0.6) {\textbf{Dense layer}\$$k$ nodes $u_1,\dots,u_k$)};
% \draw[guide] (a2.east) .. controls +(0.6,0) and +(-0.3,-0.1) .. (lbl_dense.west);

% \node[align=left,font=\small] (lbl_goal) at (2.6,-3.2) {\textbf{Goal node}\\$g$};
% \draw[guide] (g.east) .. controls +(0.6,0) and +(-0.1,0) .. (lbl_goal.west);z

\end{tikzpicture}
\caption{Construction of benchmark DAGs. A random tree of size $n$ rooted at $s$ (top) is generated, a dense layer of $k$ nodes $u_1,\dots,u_k$ is added below (each receiving edges from multiple tree nodes), and a single goal node $g$ is appended below the dense layer receiving edges from every $u_i$.}
\label{fig:dag-construction}
\end{figure}

All experiments were executed on the University of Edinburgh’s compute cluster. Each node is equipped with two Intel Xeon Gold 5218 CPUs (16 cores per CPU, 2.30~GHz base frequency), with 256~GB of DDR4 RAM per node operating at 2666~MHz. 
The cluster runs Red Hat Enterprise Linux~8.6 and uses the Slurm workload manager for parallel execution. 
We performed all experiments using Python~3.9.21 and ProbLog~2.2, with inference powered by the \texttt{SHARPSAT} knowledge compiler for top-down inference. This compiler was shown in \cite{kiesel_what_2023} to be fastest for counterfactual program query compilation.
Each query was given a time limit of 1800~seconds; queries reaching the limit were assigned this maximum runtime for consistency with \cite{kiesel_what_2023}. 

\subsection{Results and Discussion}
\label{subsec:results}

Figure~\ref{fig:tree_lines} shows the unfolded treewidths of the compiled ProbLog programs as a function of the synthetic graph size and original treewidth. 
As predicted, the SWIG-based transformation produces programs with substantially smaller unfolded treewidths than the Twin Network baseline. 
This difference arises because our approach performs a localized “graph surgery,” modifying only the clauses corresponding to intervened variables, rather than duplicating the entire logical program. 
By maintaining a single-world representation, the resulting dependency graph remains tighter and less entangled, leading to more compact knowledge-compilation structures. 
These smaller unfolded programs directly reduce the complexity of subsequent inference procedures, since treewidth is the dominant factor determining compilation cost in ProbLog-based inference \cite{Kiesel.etal2023, eiter_treewidth-aware_2021, korhonen_integrating_2021}.

\begin{figure}[h]
  \centering
  \includegraphics[width=0.85\textwidth]{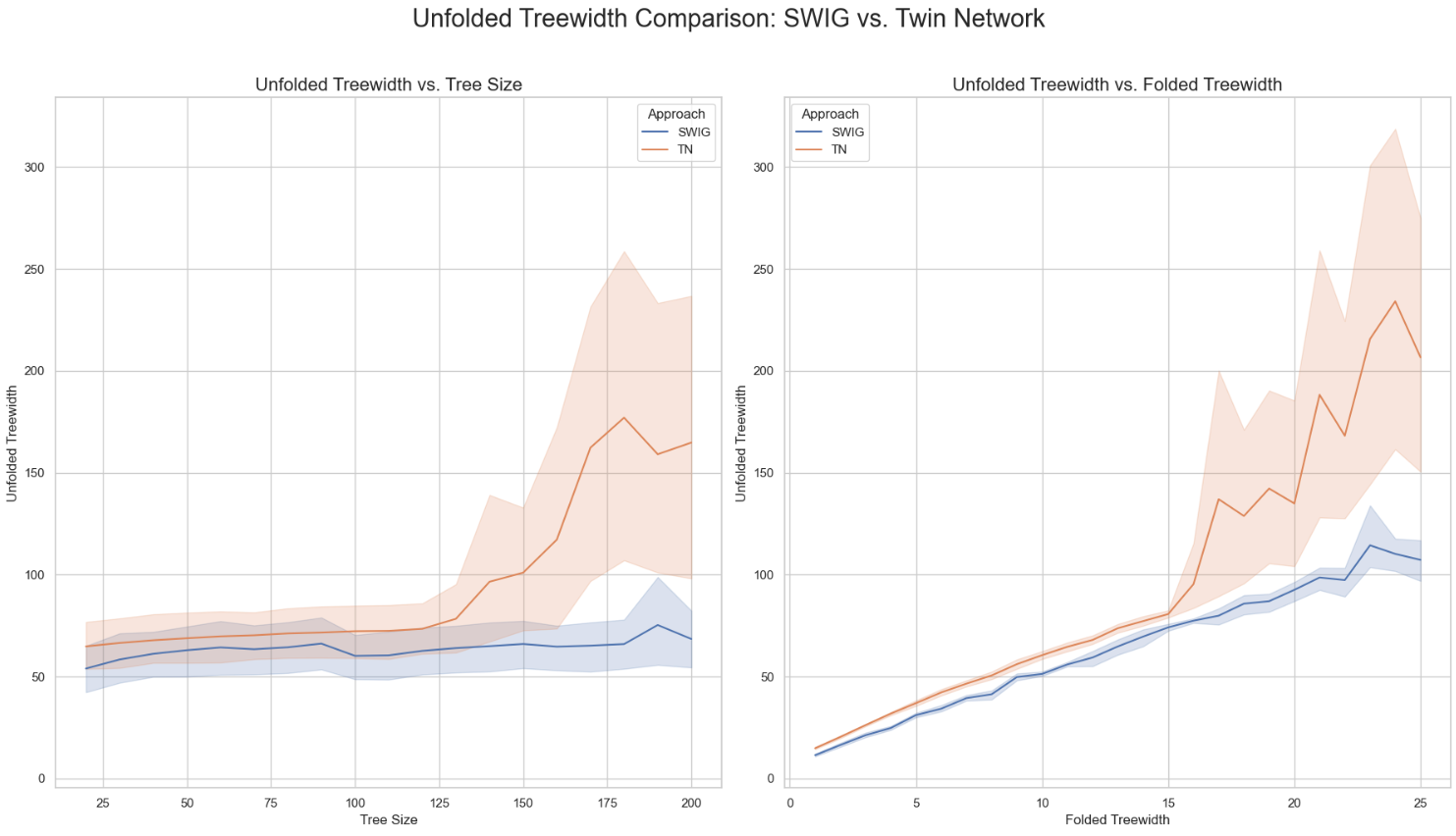}
  \caption{Unfolded treewidth as a function of synthetic graph size and original treewidth. 
  The SWIG-based transformation yields narrower unfolded dependency structures than the Twin Network approach.}
  \label{fig:tree_lines}
\end{figure}

This structural advantage translates directly into computational gains, as illustrated in Figure~\ref{fig:execution_times}. 
Across all benchmark instances, the SWIG-based programs consistently compile and evaluate faster than their Twin Network counterparts. 
On average, our method requires approximately 65\% of the runtime of the baseline for equivalent evidence and intervention configurations. 
The performance gap remains stable across varying program sizes and query complexities, confirming that reducing unfolded treewidth yields measurable improvements in both compilation and inference phases. 
Together, these results empirically validate that the SWIG-based transformation preserves the expressive and causal semantics of counterfactual reasoning while significantly improving computational efficiency.

\begin{figure}[h]
  \centering
  \includegraphics[width=0.85\textwidth]{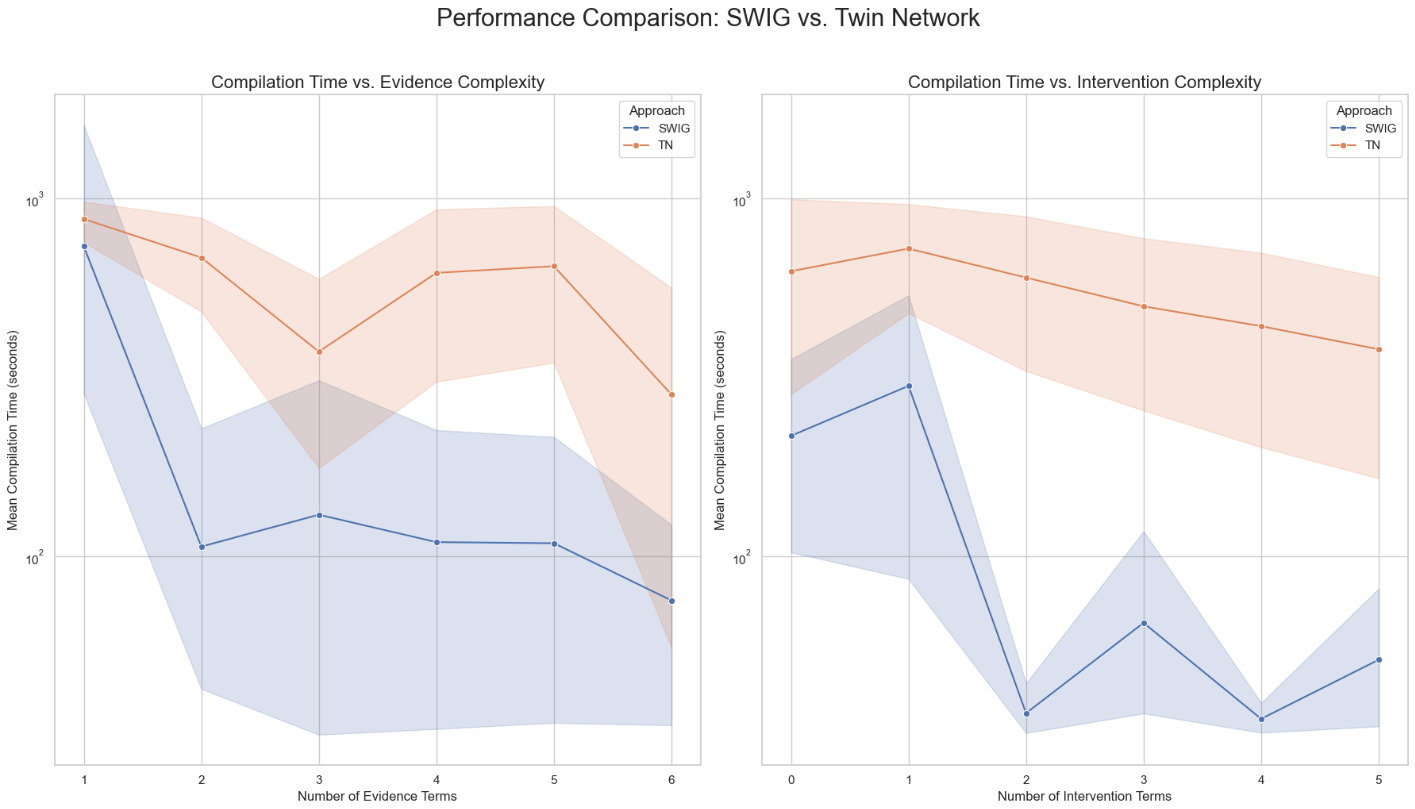}
  \caption{Mean compilation and inference times for varying numbers of evidence and intervention atoms. 
  The SWIG-based transformation achieves lower inference and compilation times, corresponding to its smaller unfolded program structures.}
  \label{fig:execution_times}
\end{figure}

\section{Conclusion}
\label{sec:conclusion}
We have presented a SWIG-based approach to counterfactual reasoning in ProbLog. By “fixing” intervened nodes and propagating this intervention to descendant rules, we are able to reduce counterfactual queries to standard marginal inference queries. Our procedure of a transformation under intervention  and subsequent evidence incorporation, is  proven to yield the counterfactual distribution $P(\cdot\mid do(X=x),\, \textbf{E} = \textbf{e})$ under assumptions of consistency and modularity. It also provides significant computational speedups in inference compared to existing approaches for counterfactual reasoning in ProbLog. In future work, it would be interesting to apply this approach to real-world settings and establishing counterfactual reasoning capabilities to DeepProbLog, an extension of ProbLog that combines it with neural network predicates to combine high level logical reason with low level subsymbolic perception \cite{manhaeve_neural_2021}. Furthermore, exploring $\sigma$-calculus, a more general form of \textit{do}-calculus \cite{correa_calculus_2020, forre_constraint-based_2018}, to define counterfactuals for programs without unique supported models offers promising research directions. 

\section*{Acknowledgments}
Funding for this research was provided by NERC through an E4 DTP studentship (NE/S007407/1).

\bibliographystyle{ieeetr}
\bibliography{references}

% \begin{thebibliography}{}
%   \bibitem[\protect\citename{Akmajian and Lehrer, }1976]{akm76}
%    Akmajian \& Lehrer A. 1976. NP  like quantifiers and the
%    problem of determining the head of an NP. {\it Linguistic
%    Analysis\/} {\it 11}, 1, 295    313.
%   \bibitem[\protect\citename{Huddleston, }1984]{hud84}
%    Huddleston, Rodney. 1984. {\it Introduction to the Grammar of
%    English}. Cambridge: Cambridge University Press.
%   \bibitem[\protect\citename{McCord, }1990]{mcc90}
%    McCord, Michael C. 1990. Slot grammar: a system for simpler
%    construction of practical natural language grammars. In R.
%    Studer (ed.), {\it Natural Language and Logic: International
%    Scientific Symposium}, pp.~118    45. Lecture Notes in Computer
%    Science. Berlin: Springer  Verlag.
%   \bibitem[\protect\citename{Salton {\it et al.}, }1990]{sal90}
%    Salton, Gerald, Zhao, Zhongnan \& Buckley, Chris. 1990.
%    A simple syntactic approach for the generation of indexing
%    phrases. Technical Report 90    1137, Department of Computer
%    Science, Cornell University.
% \end{thebibliography}
\newpage
\appendix
\section{Notation and Definitions}
\begin{table}[h!]
\caption{Summary of key notation used throughout the paper.}
\label{tab:notation}
\centering
\small
\begin{tabular}{ll}
\hline
\textbf{Symbol} & \textbf{Definition} \\
\hline

$\mathbf{V}$ &
Set of endogenous variables \\

$\mathbf{U}$ &
Set of exogenous variables \\

$\mathcal{F}$ &
Set of functional equations \\

$pa(X)$ &
Set of endogenous parents of $x$ \\

$\mathcal{M}$ &
Structural causal model (SCM): tuple of $\langle \textbf{U}, \textbf{V}, \mathcal{F}\rangle$ \\

$X := f_X(\mathrm{pa}(X), \epsilon_X)$ &
Structural equation for variable $X$ w\\

% $\mathbf{X}, \mathbf{x}$ &
% Intervened variables and their assigned values \\
$\mathrm{do}(\mathbf{X} := \mathbf{x})$ &
Pearl’s do-operator: modifies SCM equations of involving $\mathbf X$ \\

$\mathrm{fix}(\mathbf{X} := \mathbf{x})$ &
Fix-operator: constructs a Single-World Intervention Graph\\

$\mathbf{V}^*$ &
Counterfactual copies of variables in the Twin Network \\

$\mathcal{M}^K$ &
Twin Network SCM with shared exogenous variables \\

% $Y(\mathbf{x})$ &
% Potential outcome under treatment $\mathbf{X} = \mathbf{x}$ \\

$\mathcal{P}$ &
ProbLog program with logic and random facts \\

$\mathrm{LP}(\mathcal{P})$ &
Underlying logical clauses in a ProbLog program \\

$\mathfrak{B}$ &
Propositional alphabet for logic atoms \\

$\mathcal{T}(\mathcal{P})$ &
Transformed Twin Network ProbLog program \\

$\mathcal{S}(\mathcal{P})$ &
Transformed Single World Intervention Program \\

\hline
\end{tabular}
\end{table}

\section{Proofs of Theorems \ref{thrm:tn_complexity}, \ref{thrm:swip_complexity}, \ref{thrm:inference_complexity}}

%\myparagraph{Theorem \ref{thrm:tn_complexity}}
\begin{proof}[Proof of Theorem \ref{thrm:tn_complexity}]
    Consider that for each fact $r :: A \in \mathrm{Facts}(\mathcal{P})$, we either intervene on $r$ if $A$ is in the intervention set $\textbf{X}$ or we add the observed and counterfactual facts to our program $\mathcal{T}(\mathcal{P})$. This has a complexity of $\Theta(|\mathrm{Facts}(\mathcal{P})|)$. 
    
    Similarly, for each clause $C = h \leftarrow b_1, \ldots, b_n \in \mathrm{LP}(P)$ with $h \notin \textbf{X}$:, we add the clause and the counterfactual copy to $\mathcal{T}(\mathcal{P})$. Contructing the body of the counterfactual clause takes $\Theta(n)$ per clause. Thus the total rule transformation complexity is:
    \[
        \sum_{C \in \mathrm{LP}(P)} \Theta(|\mathrm{body}(C)|) = \Theta(|\mathrm{LP}(P)| \cdot L_{\max})
    \]
Summing these steps together we get a complexity of $\Theta(|\mathcal{P}|\cdot L_{\max})$.
\end{proof}

%\myparagraph{Theorem \ref{thrm:swip_complexity}}
\begin{proof}[Proof of Theorem \ref{thrm:swip_complexity}]
  Analogous to Theorem \ref{thrm:tn_complexity}, except we do not need to rewrite every clause, just the clauses with variables in the intervention set. Thus, we have $O(|\mathcal{P}|\cdot L_{\max})$
\end{proof}

%\myparagraph{Theorem \ref{thrm:inference_complexity}}
\begin{proof}[Proof of Theorem \ref{thrm:inference_complexity}]
Let $A$ be the set of internal atoms. Then $\mathcal{T}(\mathcal{P})$ contains approximately $2|A|$ atoms and rules. However, as the primal graph of $\mathcal{T}(\mathcal{P})$, denoted $G_{\mathcal{T}(\mathcal{P})}$, is constructed over the set of endogenous atoms $A \cup A^*$ and no rule in $\mathcal{T}(\mathcal{P})$ contains both an atom from $A$ and an atom from $A^*$ in its body, the resulting primal graph consists of two disconnected components. The treewidth of this composite graph, $w(\mathcal{T}(\mathcal{P}))$, is therefore equal to the treewidth of the primal graph of the original program, $w(\mathcal{P})$.
Since inference (e.g., via knowledge compilation) has complexity as a function of treewidth, we have
\[
 O(g(w(\mathcal{T}(\mathcal{P})))) = O(g(w(\mathcal{P})))
\]
Let $G_{\mathcal S (\mathcal{P})}$ be the primal graph of the transformed program $\mathcal S (\mathcal{P})$. Note that removing rules with heads in $\textbf{X}$ deletes edges in $G_P$ and that rewriting variables with fixed values does not add new dependencies. Therefore:
    \[
    w(\mathcal{S}(\mathcal{P})) \leq w(P)
    \]
Hence, inference complexity is:
\[
    O(g(w(\mathcal{S}(\mathcal{P})))) \leq O(g(w(\mathcal{T}(\mathcal{P}))))
    \]
\end{proof}

\section{Proofs of Theorem \ref{thrm:swip_cp1} and Theorem \ref{thrm:swip_cp2}}

To demonstrate the consistency of the SWIP treatment of counterfactuals with CP-logic we start by recalling the theory of CP-Logic from Vennekens et al. (2009) and the LPAD-programs of Vennekens et al. (2004) with their standard semantics. An LPAD program, $\bf P$, is a finite set of rules of the following form:

\[
RC := h_1 : \pi_1 ; \ldots ; h_l : \pi_l \leftarrow b_1, \ldots, b_n
\]

where $h_i$ and $b_i$ are atoms and literals in some set of propositions $\mathfrak{B}$. We have that the $\pi_i \in [0,1]$ are associated probabilities for $h_i$ such that $\sum_i \pi_i \leq 1$. We define $\text{head}(RC) := (h_1, \ldots, h_l)$ to be a tuple of propositions which are the head of $RC$, where $h \in (h_1, \ldots, h_l)$ if $h = h_i$ for a $1 \leq i \leq l$ and furthermore, $l(RC) := l$ and $h_i(RC) := h_i$ for $1 \leq i \leq l$. Similarly, the body of $RC$ is the finite set of literals $\text{body}(RC) := \{b_1, \ldots, b_n\}$

A selection, $\sigma$, of $\bf{P}$ is a function $\sigma : \mathcal{P} \to \mathbb{N} \cup \{\bot\}$, where $\bot \notin \mathbb{N}$, that assigns to each LPAD-clause $RC \in \bf{P}$ a natural number in $[1,l]$ or $\sigma(RC) := \bot$. To each selection $\sigma$, we associate a probability
\[
\pi(\sigma) := \prod_{\substack{RC \in \mathcal{P} \\ \sigma(RC) \in \mathbb{N}}} \pi_{\sigma(RC)}(RC) 
\prod_{\substack{RC \in \mathcal{P} \\ \sigma(RC)=\bot}} 
\left( 1 - \sum_{i=1}^{l(RC)} \pi_i(RC) \right)
\]
and each selection gives a logic program
\[
\mathcal{P} ^\sigma := \{ h_{\sigma(RC)} \leftarrow \text{body}(RC) : RC \in \mathcal{P}, \sigma(RC) \neq \bot \}.
\]
We define $\pi^{dist}$,  the distribution semantics of $\mathcal{P}$, as 
\[
\pi^{dist}_{\mathcal{P}}(\phi) := \sum_{\substack{\sigma \text{ selection} \\ \mathcal{P}^\sigma \models \phi}} \pi(\sigma).
\]
where $\phi$ is some $\mathfrak{P}$-formula. 

Rigguzi (2020) \S2.4 establishes we can can translate an LPAD program $\mathcal{P}$ in $\mathfrak{B}$ to a ProbLog program, $\textrm{Prob}(\mathcal{P})$ where the logic of the program $LP(\mathrm{Prob}(\mathcal{P})$ is given by choosing distinct propositions $h_i^{RC}, u_i(RC)\not\in\mathfrak{B}$ for all $RC \in \mathcal{P}$ and natural number in $[1, l]$ and letting
\[
h_i^{RC} \leftarrow \text{body}(RC) \cup \{\neg h_j^{RC} \mid 1 \leq j < i\} \cup \{u_i(RC)\},
\]
\[
h_i \leftarrow h_i^{RC}
\]
Meanwhile, the random facts, $\mathrm{Facts}(\mathrm{Prob}(\mathcal{P}))$ are defined as 
\[
\text{Facts}(\text{Prob}(\mathcal{P})) := \left\{ \frac{\pi_i(RC)}{1 - \prod_{1 \leq j < i} \pi_j(RC)} :: u_i(RC) \;\middle|\; RC \in \mathcal{P}, 1 \leq i \leq l \right\}.
\]

This then ensures the following:

\begin{theorem}[Riguzzi (2020), §2.4]
\label{thrm:riguzzi_1}
Let $\mathcal{P}$ be a LPAD-program. Then, for every selection $\sigma$ of $\mathcal{P}$ a set of possible worlds $\mathcal{E}(\sigma)$, which consists of all possible worlds $\mathcal E$ such that $\neg u_i(RC)$ holds unless $\sigma(RC) \neq \bot$ or $i > \sigma(RC)$ and such that $u_{\sigma(RC)}(RC)$ holds for every $RC \in \mathcal{P}$ with $\sigma(RC) \neq \bot$.  
We conclude that $\mathcal{P}^\sigma$ yields the same answer to every $\mathfrak{B}$-formula as the logic programs $\text{LP}(\text{Prob}(\mathcal{P})) \cup \xi$ for every $\xi \in \mathcal{E}(\sigma)$ and that $\pi(\mathcal{E}(\sigma)) = \pi(\sigma)$. Further, the distribution semantics $\pi^{dist}_{\mathcal{P}}$ of $\mathcal{P}$ and the distribution semantics $\pi^{dist}_{\text{Prob}(\mathcal{P})}$ of $\text{Prob}(\mathcal{P})$ yield the same joint distribution on $\mathcal{P}$. 
\end{theorem}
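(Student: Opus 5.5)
We prove the statement directly from the definitions of $\mathrm{Prob}(\mathcal{P})$. Fix a selection $\sigma$ and consider the candidate set $\mathcal{E}(\sigma)$. It consists of the worlds over the fresh facts $u_i(RC)$ that meet three conditions for every clause $RC$ with $\sigma(RC)=k\neq\bot$:
\begin{itemize}
\item $u_k(RC)$ is true;
\item $u_j(RC)$ is false for every $j<k$;
\item $u_j(RC)$ is unconstrained for every $j>k$.
\end{itemize}
For $\sigma(RC)=\bot$, all $u_j(RC)$ are false. The plan has three steps. First, show that the sets $\mathcal{E}(\sigma)$ partition the worlds over $\mathrm{Facts}(\mathrm{Prob}(\mathcal{P}))$. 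Second, show that each world in $\mathcal{E}(\sigma)$ yields a logic program equivalent to $\mathcal{P}^\sigma$ on $\mathfrak{B}$-formulas. Third, show that $\pi(\mathcal{E}(\sigma))=\pi(\sigma)$. Summing over $\sigma$ then gives the equality of distribution semantics.

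\textbf{Partition.} A world over the $u$'s determines a unique selection: for each clause $RC$, take the least index $i$ with $u_i(RC)$ true, or $\bot$ if there is none. This is exactly the inverse of the definition of $\mathcal{E}(\sigma)$, so the sets $\mathcal{E}(\sigma)$ are disjoint and cover all worlds.

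\textbf{Logical equivalence.} Fix $\xi\in\mathcal{E}(\sigma)$ and a clause $RC$ with $\sigma(RC)=k$. We show by induction on $i$ that $h_i^{RC}$ is equivalent to $\mathrm{body}(RC)$ when $i=k$, and is false otherwise.
\begin{itemize}
\item For $i<k$, $u_i(RC)$ is false, so $h_i^{RC}$ is false.
\item For $i=k$, the guards $\neg h_j^{RC}$ with $j<k$ all hold, so $h_k^{RC}\leftrightarrow\mathrm{body}(RC)$.
\item For $i>k$, the guard $\neg h_k^{RC}$ fails exactly when the body holds. When the body fails, $h_i^{RC}$ fails anyway. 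So $h_i^{RC}$ is false in either case.
\end{itemize}
When $\sigma(RC)=\bot$, every $h_i^{RC}$ is false. Substituting these equivalences turns the rules $h_i\leftarrow h_i^{RC}$ into exactly the clauses of $\mathcal{P}^\sigma$.

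This substitution must respect supported-model semantics: body atoms of $\mathfrak{B}$ may depend on other heads, so we cannot unfold pointwise. We handle this by arguing on supported models. Any supported model of $\mathrm{LP}(\mathrm{Prob}(\mathcal{P}))\cup\xi$, restricted to $\mathfrak{B}$, is a supported model of $\mathcal{P}^\sigma$, and conversely it extends uniquely by evaluating the auxiliary atoms. Hence the two programs have the same models on $\mathfrak{B}$ and answer every $\mathfrak{B}$-formula the same way. This is the step needing the most care.

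\textbf{Probabilities.} The factors for different clauses are independent, so it suffices to check one clause. Write $p_i$ for the probability attached to $u_i(RC)$. The mass of the constraint $\sigma(RC)=k$ is $\prod_{j<k}(1-p_j)\,p_k$, and this should equal $\pi_k$. For $\sigma(RC)=\bot$, the mass $\prod_{j\le l}(1-p_j)$ should equal $1-\sum_i\pi_i$. We prove both by induction on $k$, using the telescoping identity $\prod_{j<k}(1-p_j)=1-\sum_{j<k}\pi_j$.

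The induction step reads $p_k\bigl(1-\sum_{j<k}\pi_j\bigr)=\pi_k$. This forces $p_k=\pi_k/(1-\sum_{j<k}\pi_j)$, which is the standard normalization. The displayed definition writes the denominator as $1-\prod_{j<k}\pi_j$. We read that as a typo for the sum; with the sum, both identities follow.

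Combining the three steps,
\[
\pi^{dist}_{\mathrm{Prob}(\mathcal{P})}(\phi)=\sum_\sigma\;\sum_{\xi\in\mathcal{E}(\sigma),\,\xi\models\phi}\pi(\xi)=\sum_{\sigma:\,\mathcal{P}^\sigma\models\phi}\pi(\sigma)=\pi^{dist}_{\mathcal{P}}(\phi).
\]
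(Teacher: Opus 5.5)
The paper does not prove this statement. It quotes it from Riguzzi (2020), \S2.4, as Kiesel et al.\ (2023) do in their Appendix B, so the paper's only route is citation.

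Your proposal is a correct, self-contained verification of what that citation asserts. It has three parts: a least-true-index partition of the worlds, the collapse of the auxiliary heads to $h^{RC}_{\sigma(RC)}\leftrightarrow\mathrm{body}(RC)$ inside every model of the completion, and per-clause telescoping of the masses.

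Checking the argument against the objects actually displayed here, rather than citing, pays off. You rightly caught that the denominator must be $1-\sum_{j<i}\pi_j(RC)$, equivalently Riguzzi's $\prod_{j<i}(1-p_j)$. The displayed $1-\prod_{j<i}\pi_j(RC)$ even divides by zero at $i=1$. Working with supported models rather than a unique model also makes the equivalence hold without a unique-supported-model hypothesis, which this statement does not assume.

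You silently corrected a second typo, and you should say so explicitly, as you did for the denominator. Read literally, ``$\neg u_i(RC)$ holds unless $\sigma(RC)\neq\bot$ or $i>\sigma(RC)$'' leaves $u_i(RC)$ unconstrained for $i<\sigma(RC)$. Under that reading both conclusions fail:
\begin{itemize}
\item the sets $\mathcal{E}(\sigma)$ overlap;
\item a world with $u_1(RC)$ true placed in $\mathcal{E}(\sigma)$ with $\sigma(RC)=2$ derives $h_1(RC)$ instead of $h_2(RC)$ whenever $\mathrm{body}(RC)$ holds;
\item the clause's mass is $p_2$ rather than $\pi_2$.
\end{itemize}
Your description, which forces $u_j(RC)$ false for $j<\sigma(RC)$, is the one both conclusions need.

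A minor point: if $\sum_{j<k}\pi_j(RC)=1$, the normalisation is $0/0$ and nothing is forced. Any $p_k\in[0,1]$ works there, since both sides of your identity vanish.
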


Conversely, we can turn each ProbLog program to an equivalent LPAD-Program. Again, Riguzzi (2020), \S2.4 establishes for a ProbLog program $\mathcal{P}$ the LPAD-transformation $\text{LPAD}(\mathcal{P})$ is the LPAD-program that consists of one clause of the form $u(RF) : \pi(RF) \leftarrow$ for every random fact $\pi(RF) :: u(RF)$ of $\mathcal{P}$ and a clause of the form $\text{head}(LC) : 1 \leftarrow \text{body}(LC)$ for every logic clause $LC \in \text{LP}(\mathcal{P})$.  
In this case, every selection $\sigma$ of $\text{LPAD}(\mathcal{P})$ of probability not zero corresponds to a unique possible world $\mathcal{E}(\sigma)$, in which $u(RC)$ is true if and only if $\sigma(RC) \neq \bot$.

Again, we obtain that the LPAD-transformation respects the distribution semantics.

\begin{theorem}[Riguzzi (2020), §2.4]
\label{thrm:riguzzi_2}
By the transformation of $\mathcal{P}$ to the LPAD program $\text{LPAD}(\mathcal{P})$, we have $\text{LP}(\mathcal{P}) \cup \mathcal{E}(\sigma)$ and $\text{LPAD}(\mathcal{P})^\sigma$ yield the same answer to every $\mathcal{P}$-formula. We also get that $\pi(\sigma) = \pi(\mathcal{E}(\sigma))$. Hence, $\mathcal{P}$ and $\text{LPAD}(\mathcal{P})$ yield the same probability for every $\mathcal{P}$-formula.
\end{theorem}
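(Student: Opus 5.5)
The statement to prove is Theorem~\ref{thrm:riguzzi_2}: for a ProbLog program $\mathcal{P}$, its LPAD-transformation $\mathrm{LPAD}(\mathcal{P})$ yields, selection by selection, the same logic program as $\mathrm{LP}(\mathcal{P})$ together with the matching possible world, with equal probability weights. The plan is to set up an explicit bijection between non-zero-probability selections of $\mathrm{LPAD}(\mathcal{P})$ and truth assignments to the random facts of $\mathcal{P}$. I will check that this bijection preserves both the induced logic program and the probability, and then sum over it.

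\textbf{Step 1 (the correspondence).} The clauses of $\mathrm{LPAD}(\mathcal{P})$ come in two kinds.
\begin{itemize}
\item A fact clause $u(RF):\pi(RF)\leftarrow$ has one head. A selection may set $\sigma(RF)=1$ with weight $\pi(RF)$, or $\sigma(RF)=\bot$ with weight $1-\pi(RF)$.
\item A logic clause $\mathrm{head}(LC):1\leftarrow\mathrm{body}(LC)$ gives weight $1-1=0$ to $\bot$. So any selection of non-zero probability must choose the head of every logic clause.
\end{itemize}
Non-zero-probability selections are therefore determined entirely by their values on the fact clauses. I map such a $\sigma$ to the world $\mathcal{E}(\sigma)$ in which $u(RF)$ is true iff $\sigma(RF)\neq\bot$. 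This map is clearly a bijection onto the set of all truth assignments to $\mathrm{Facts}(\mathcal{P})$.

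\textbf{Step 2 (same program, same answers).} By the definition of $\mathrm{LPAD}(\mathcal{P})^\sigma$, the selected program consists of:
\begin{itemize}
\item the fact $u(RF)\leftarrow$ for exactly those $RF$ with $\sigma(RF)\neq\bot$, i.e.\ those true in $\mathcal{E}(\sigma)$;
\item every rule $\mathrm{head}(LC)\leftarrow\mathrm{body}(LC)$ of $\mathrm{LP}(\mathcal{P})$.
\end{itemize}
This is syntactically the program $\mathrm{LP}(\mathcal{P})\cup\mathcal{E}(\sigma)$, with $\mathcal{E}(\sigma)$ read as its set of true facts; false facts have no defining clause and are false under supported/completion semantics. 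Identical programs have identical (unique supported) models, so they give the same answer to every formula.

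\textbf{Step 3 (weights).} The weight $\pi(\sigma)$ is the product over fact clauses of $\pi(RF)$ or $1-\pi(RF)$, times $1$ for each logic clause. This is precisely the ProbLog world weight $\pi_{\mathcal{P}}(\mathcal{E}(\sigma))$.

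\textbf{Step 4 (summing).} For any formula $\phi$, split $\pi^{dist}_{\mathrm{LPAD}(\mathcal{P})}(\phi)$ into the sum over non-zero selections (zero-weight ones contribute nothing). Reindex that sum via the bijection and apply Steps 2--3. The result is $\sum_{\omega\models\phi}\pi_{\mathcal{P}}(\omega)=\pi_{\mathcal{P}}(\phi)$.

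The only delicate point is Step 2's treatment of false facts and of the head-selection semantics, i.e.\ that a program lacking $u(RF)$ entails $\neg u(RF)$. I would handle this by appealing to the unique supported model (Clark completion) semantics. Under that semantics an atom with no clauses is false, which matches the world $\mathcal{E}(\sigma)$.
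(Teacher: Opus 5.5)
Your proposal is correct and follows the paper's own (cited and only sketched) argument. Nonzero-weight selections of $\mathrm{LPAD}(\mathcal{P})$ must pick every logic-clause head, so each corresponds to the world in which $u(RF)$ is true iff $\sigma(RF)\neq\bot$, with an identical selected program and identical weight; your Steps 1--4 just fill in the details the paper leaves implicit.

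One small tidy-up: if some $\pi(RF)\in\{0,1\}$, the map from nonzero-weight selections is not onto all truth assignments. You can instead index by the selections that choose every logic-clause head, which are in genuine bijection with all worlds. This changes nothing in Step 4, since zero-weight terms vanish on both sides.
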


Up until now, we have not differed from Kiesel et al. (2023) proof for showing the equivalence of interventions and counterfactuals in CP-Logic with SCMs by the $do$-operator in Twin Networks. However, to demonstrate that the equivalence holds under the $fix$-operator, we must extend the above theorems with the lemma which follows (and is proved) closely in style to Kiesel et al. (2023) Appendix B. 
\begin{lemma}
\label{lem:fix_cp}
Choose a proposition $X \in \mathcal{P}$ together with a truth value $x$.
\begin{enumerate}
\item In the situation of Theorem \ref{thrm:riguzzi_1}, for every possible world $\mathcal{E} \in \mathcal{E}(\sigma)$ the logic programs $\mathcal{P}^\sigma_{fix(X:=x)}$ and $\text{LP}(\text{Prob}(\mathcal{P})_{fix(X:=x)}) \cup \mathcal{E}$ yield the same answer to every $\mathfrak{P}$-formula.
\item In the situation of Theorem \ref{thrm:riguzzi_2}, for every selection $\sigma$ of $\text{LPAD}(\mathcal{P})$, the logic programs $\text{LPAD}(\mathcal{P})^\sigma_{fix(X:=x)}$ and $\text{LP}(\text{Prob}(\mathcal{P})_{fix(X:=x)}) \cup \mathcal{E}(\sigma)$ yield the same answer to every $\mathfrak{P}$-formula.
\end{enumerate}
\end{lemma}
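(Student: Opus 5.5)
We prove both parts by making the $fix$ operator concrete on each side and then reducing to the unintervened equivalences of Theorems \ref{thrm:riguzzi_1} and \ref{thrm:riguzzi_2}. On the LPAD side, $\mathcal{P}^\sigma_{fix(X:=x)}$ denotes the logic program obtained from $\mathcal{P}^\sigma$ by SWIFT-style surgery. We delete every clause with head $X$. We replace each body occurrence of $X$ by the fresh atom $X_{fixed}$, keeping the sign. We add the fact $X_{fixed}$ if $x$ is true, and add nothing if $x$ is false. On the ProbLog side, $\mathrm{Prob}(\mathcal{P})_{fix(X:=x)}$ is exactly $\mathrm{SWIFT}(\mathrm{Prob}(\mathcal{P}), fix(X:=x))$ from Algorithm \ref{alg:swip_transform}. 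The strategy is to check that the surgery commutes with the translations, up to the auxiliary atoms $h_i^{RC}$ and $u_i(RC)$, which never occur in $\mathfrak{P}$-formulas.

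For part 1, fix $\sigma$ and $\mathcal{E}\in\mathcal{E}(\sigma)$. In $\mathrm{LP}(\mathrm{Prob}(\mathcal{P}))$, the only clauses with head $X$ are of the form $X\leftarrow h_i^{RC}$, with $h_i(RC)=X$. SWIFT deletes these. The clauses $h_i^{RC}\leftarrow\ldots$ remain, but their heads are auxiliary. After deletion, $h_i^{RC}$ no longer feeds any $\mathfrak{P}$-atom, and it is pruned in exactly the way the clause with head $X$ is removed on the LPAD side. Every other body occurrence of $X$ lies either in some $\mathrm{body}(RC)$ inside a clause $h_i^{RC}\leftarrow \mathrm{body}(RC)\cup\ldots$, or nowhere else. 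Rewriting it to $X_{fixed}$ therefore matches the rewriting of $\mathrm{body}(RC)$ in $\mathcal{P}^\sigma$. We then run the argument of Theorem \ref{thrm:riguzzi_1} again on the modified programs. Under $\mathcal{E}$, $h_i^{RC}$ holds iff $\mathrm{body}(RC)$ holds and $i=\sigma(RC)$, so $h_i\leftarrow h_i^{RC}$ reduces to the selected clause $h_{\sigma(RC)}\leftarrow\mathrm{body}(RC)$. We show this by induction along the acyclic dependency order that unique supported models provide. The induction gives the same supported model on $\mathfrak{P}\cup\{X_{fixed}\}$, hence the same answer to every $\mathfrak{P}$-formula.

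Part 2 is the same argument on the simpler translation of Theorem \ref{thrm:riguzzi_2}. Each logic clause $LC$ becomes $\mathrm{head}(LC):1\leftarrow\mathrm{body}(LC)$, and each fact becomes a unary annotated disjunction. Deleting clauses with head $X$ and rewriting bodies gives a syntactically identical program on both sides, except when $X$ is itself a random fact $u(RF)$. In that case SWIFT deletes nothing, because $X$ heads no clause in $\mathrm{LP}$. Only the bodies are redirected to $X_{fixed}$, so $u(RF)$ stays an unconstrained leaf in $\mathcal{E}(\sigma)$. On the LPAD side, the clause $u(RF):\pi\leftarrow$ is removed only in the program, while $\sigma$ is unchanged. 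Both programs then agree on every atom other than $u(RF)$. This is the one place where the factual copy of $X$ must be kept and not deleted. Consequently, $fix$ must be read as keeping $X$ itself determined by its original mechanism, and only its children as seeing $x$.

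The main obstacle is exactly this definitional point. We must settle what $\mathcal{P}^\sigma_{fix(X:=x)}$ means in the SWIG sense, where $X$ keeps its factual value and its children see $x$, and show that this reading is consistent on both sides. So we first fix the convention that the factual $X$ retains its defining clauses and only the downstream occurrences are redirected. If instead the paper's SWIFT deletion of the $X$-clauses is taken literally, the $\mathfrak{P}$-formulas mentioning $X$ must be excluded or read as $X_{fixed}$. We would try the first convention, since it makes the correspondence $\mathfrak{P}$-atom by $\mathfrak{P}$-atom, and handle the auxiliary atoms by induction on the acyclic order.
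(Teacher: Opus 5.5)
Your part 1 argument follows the paper's route and is more explicit than it. The paper also argues that the surgery removes the same head options in both syntaxes, that the auxiliary choices are already fixed by $\mathcal{E}$, and then appeals to modularity.

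\textbf{Main gap: you never settle on one operator.} Your part 1 argument relies on SWIFT deleting the clauses $X\leftarrow h_i^{RC}$. Your last paragraph instead adopts the convention that $X$ keeps its defining clauses. Whatever its appeal as a reading of node splitting, that is not the operator the lemma is about. Both the paper's proof (remove every head option for $X$, add a deterministic fact) and Algorithm~\ref{alg:swip_transform} (drop every clause whose head lies in $\mathbf{X}$) delete the mechanism of an internal $X$. That is the program Algorithm~\ref{alg:cp-fix_counterfactuals} and Theorems~\ref{thrm:swip_cp1}--\ref{thrm:swip_cp2} feed into the lemma.

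The switch also rests on a misdiagnosis. For internal $X$, literal deletion leaves $X$ without defining clauses on \emph{both} sides, so $X$ is false in both programs and formulas mentioning $X$ need no exclusion. The mismatch you found in part 2 for a random fact $X=u(RF)$ comes from applying two different surgeries. You delete $u(RF):\pi\leftarrow$ on the LPAD side, while SWIFT starts from $\mathcal{S}(\mathcal{P})\leftarrow\mathrm{Facts}(\mathcal{P})$ and never touches $\pi::u(RF)$.

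The repair is to transport a single operator through the translation. Then the LPAD-side treatment of $u(RF):\pi\leftarrow$ is exactly what SWIFT does to $\pi::u(RF)$, namely nothing. Part 2 then reduces to Theorem~\ref{thrm:riguzzi_2} applied to the transformed ProbLog program, with $\sigma$ carried over to the rewritten clauses and the probability-one clause for $X_{fixed}$ selected. Under the paper's do-style reading, instead delete the fact on both sides and let $\mathcal{E}(\sigma)$ range only over the facts of the intervened program. Otherwise $\mathcal{E}(\sigma)$ can reassert $X$ when $x$ is false, a corner case the paper's modularity argument also glosses over.

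\textbf{Smaller gap: the induction ``along the acyclic dependency order that unique supported models provide''.} Unique supported models do not yield an acyclic dependency graph; acyclicity is only a sufficient condition. Moreover, part 1 is stated for arbitrary LPADs, as in Theorem~\ref{thrm:riguzzi_1}. Compare Clark completions instead:
\begin{itemize}
\item Under $\mathcal{E}\in\mathcal{E}(\sigma)$, recursion on $i$ forces $h_i^{RC}$ to be equivalent to the rewritten $\mathrm{body}(RC)$ when $i=\sigma(RC)$, and to \emph{false} otherwise.
\item Substituting this into the completion of $\mathrm{LP}(\mathrm{Prob}(\mathcal{P})_{fix(X:=x)})\cup\mathcal{E}$ gives exactly the completion of $\mathcal{P}^\sigma_{fix(X:=x)}$, because the completions of $X$ and the rewritten bodies coincide.
\item Hence restriction to $\mathfrak{B}\cup\{X_{fixed}\}$ is a bijection between supported models. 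That is all ``same answer'' requires, and no acyclicity is needed.
\end{itemize}

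Also drop your claim that, after $X\leftarrow h_i^{RC}$ is deleted, $h_i^{RC}$ ``no longer feeds any $\mathfrak{P}$-atom''. It still does, through the blocking literal $\neg h_i^{RC}$ in the later options $h_k^{RC}$, $k>i$. That blocking is precisely what makes a selection $\sigma(RC)=i$ with $h_i(RC)=X$ contribute nothing, matching the LPAD side.
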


\begin{proof}
To prove (1), consider by Theorem B.1, for every $\mathfrak{P}$-formula $\phi$,  $\mathcal{P}^\sigma$ and $\text{LP}(\text{Prob}(\mathcal{P})) \cup \mathcal{E}$ will give the same answer to the query because in both programs are modular and consequently, their behavior is invariant to erasing clauses with $X$ in the head or adding $X \leftarrow$. More precisely, the SWIG style $fix(X:=x)$-operation in a LPAD/ProbLog syntactic setting is implemented by removing every head-option $h:\pi$ from every LPAD-clause whose head contains $X$, and adds some deterministic fact $X \leftarrow 0/1$. 

This implements the same surgical operations as used in implementing the $do$-operator in earlier proofs. If we fix a selection $\sigma$ of $\mathbf P$ and let \(\mathcal E\in\mathcal E(\sigma)\) be an arbitrary possible world corresponding to \(\sigma\), then the translation \(\mathrm{Prob}(\mathbf P)\) introduces for each annotated disjunction in \(\mathbf P\) a sequence of auxiliary atoms and random facts whose semantic effect is to pick exactly one head option (or none) according to the annotated probabilities. A selection \(\sigma\) corresponds to fixing which auxiliary random facts are true in a possible world \(\mathcal E\) of \(\mathrm{Prob}(\mathbf P)\); conversely \(\mathcal E\) determines \(\sigma\). 

Performing the syntactic removals and additions that implement \(\mathrm{fix}(X:=x)\) on \(\mathbf P^\sigma\) is equivalent to performing the corresponding removals and additions on the \(\mathrm{Prob}(\mathbf P)\) translation and then conjoining the choices represented by \(\mathcal E\). Intuitively, the removals delete the same head-options in both syntaxes, and the deterministic additions become deterministic facts in the ProbLog encoding (or deterministic clauses whose head is an ordinary atom).

Because logic programs are modular, if two programs differ only in clauses for a restricted set of atoms (here, the atoms in \(X\) and the auxiliary atoms that were introduced to encode choices for those rules), then the truth of a \(\mathbf P\)-formula that mentions only other atoms remains unaffected. 
In particular, when we compare \(\mathbf P^\sigma_{\text{fix}(X:=x)}\) to \(\mathrm{LP}(\mathrm{Prob}(\mathbf P)_{\text{fix}(X:=x)}) \cup \mathcal E\), all choices about auxiliary atoms (those in \(\mathcal E\)) are already fixed by \(\mathcal E\), and the forced/blocked heads for atoms in \(X\) are syntactically identical in the two programs. It follows that the two programs produce the same derivations for all atoms and thus agree on the truth of every \(\mathfrak P\)-formula.

For (2), we analogously apply Theorem B.2 but start with a selection \(\sigma\) of \(\mathrm{LPAD}(\mathcal P)\). The construction of \(\mathrm{LPAD}(\mathbf P)^\sigma\) and the effect of \(\mathrm{fix}(X:=x)\) on it correspond, under the ProbLog translation, to \(\mathrm{LP}(\mathrm{Prob}(\mathbf P)_{\text{fix}(X:=x)})\) conjoined with the possible world \(\mathcal E(\sigma)\) that encodes the selection \(\sigma\). The same modularity and syntactic-translation observations as above show the two programs are extensionally identical (on atoms of interest) and therefore produce the same answers to every \(\mathfrak P\)-formula.
\noindent\(\square\)
\end{proof}

From here, our proofs for Theorem 4 and 5 follows Kiesel et al. (2023) again. CP-logic estalishes a causal semantics for LPAD-programs. The semantics focus on $\mathfrak{P}$-processes and tying them to the logic of LPAD. More precisely, a $\mathfrak{P}$-process $\mathcal{T}$ is a tuple $(T,\mathcal{I})$, where $T$ is a directed tree and each edge is a labelled with a probability that describes transitioning from each node which are Humean events and. For each non-leaf node, the outgoing edges probabilities must sum to one. $\mathcal{I}$ is a map that assigns each node $n$ in $T$ a Herbrand Interpretation $\mathcal{I}(n)$ in $\mathfrak{B}$. 

Furthermore, for each $n$ in $T$, we associate the probability $\pi^T(n)$ which is given by the product of the probabilities of all edges along the walk from root $\bot$ of $T$ to $n$. This produces a distribution $\pi^T$ on the Herbrand interpretations of $I$ of $\mathfrak{B}$ by 
\[
\pi^\mathcal{T}(I) := \sum_{l \text{ leaf of } T, \, \mathcal{I}(l)=I} \pi^\mathcal{T}(l).
\]

Vennekens et al. (2009) connects LPAD-programs to $\mathfrak{P}$-processes by fixing a LPAD-program $\mathcal{P}$ and defining a hypothetical derivation sequence of $n$ in $\mathcal{T}$ as a sequence of three valued interpretations $(\nu_i)_{0\leq i \leq n}$ where $\nu_0$ assigns False to all atoms not in $\mathcal{I}(n)$ and for $i>0$, there exists $RC \in \mathbf P$ and $j \in [1,l]$ with $\text{body}(RC)^{\nu_i} \neq False$, with $h_j^{i+1} = \text{Undefined}$, and with $\nu_i(p) = \nu_{i+1}(p)$ for all other proposition $p \in \mathcal{P}$.

Such a sequence is terminal if it cannot be extended and each terminal hypothetical derivation sequence $n$ has the same limit $\nu_n$, which is known as the potential in $n$.

For $RC \in \mathcal{P}$, we say that $RC$ fires in a node $n$ of $T$ if for each $1 < i < l(RC)$ there exists a child $n_i$ of $n$ such that $\mathcal{I}(n_i) = \mathcal{I}(n) \cup \{h_i(RC)\}$ and such that each edge $(n,n_i)$ is labeled with $\pi_i(RC)$. Moreover, there exists a child $n_{l(RC)+1}$ of $n$ with $\mathcal{I}(n_{l(RC)+1}) = \mathcal{I}(n)$.  

Let $\mathcal{R}_{\mathcal{E}}(n)$ denotes the set of all rules $RC \in \mathcal{P}$, for which there exists no ancestor $a$ of $n$ with $\mathcal{E}(a) = RC$. Then,
$\mathcal{T}$ then may be an execution model of $\mathcal{P}$, $\mathcal{T} \models \mathcal{P}$, if there exists a mapping $\mathcal{E}$ from the non-leaf nodes of $T$ to $\mathcal{P}$ such that:
\begin{enumerate}
\item $\mathcal{I}(\bot) = \emptyset$ for the root $\bot$ of $T$.
\item In each non-leaf node $n$ a LPAD-clause $\mathcal{E}(n) \in \mathcal{R}_{\mathcal{E}}(n)$ fires with $\mathcal{I}(n) \models \text{body}(\mathcal{E}(n))$.
\item For each leaf $l$ of $T$ there exists no LPAD-clauses $RC \in \mathcal{R}_{\mathcal{E}}(l)$ with $\mathcal{I}(l) \models \text{body}(RC)$.
\item For every node $n$ of $T$ we find body$(\mathcal{E}(n))^{\nu_n} \neq Undefined$, where $\nu_n$ is the potential in $n$.
\end{enumerate}
If $\mathcal{T} \models \mathcal{P}$, then the probability distribution defined by $\pi^{CP}_{\mathcal{P}} := \pi^{\mathcal{T}}$ matches the distribution semantics $\pi^{dist}_{\mathcal{P}}$ and implies the following.

\begin{lemma}[Vennekens et al. (2009), §A.2]
Let $l$ be a leaf node in an execution model $\mathcal{T}$ of the LPAD-program $\mathcal{P}$. In this case, there exists a unique path $p$ from the root $\bot$ of $T$ to $l$. Define the selection $\sigma(l)$ by setting $\sigma(l)(RC) := i \in \mathbb{N}$ if and only if there exists a node $n_j$ along $p$ with $\mathcal{E}(n_j) = RC$ and $\mathcal{I}(n_{j+1}) := \mathcal{I}(n_j) \cup \{h_i(RC)\}$. Otherwise, we set $\sigma(l)(RC) := \bot$. In this way, we obtain that $\mathcal{P}^{\sigma(l)} \models \mathcal{I}(l)$. On the other hand, we find for each selection $\sigma$ of $\mathcal{P}$ a leaf $l$ of $T$ with $\sigma(l) = \sigma$.
\end{lemma}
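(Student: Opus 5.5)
The statement is Vennekens et al. (2009), \S A.2: every leaf of an execution model determines a selection whose induced program models the leaf's interpretation, and every selection is realised by some leaf. The plan is a structural argument on the execution tree $\mathcal{T}=(T,\mathcal{I})$ with its firing map $\mathcal{E}$. There are three parts: well-definedness of $\sigma(l)$, the model claim, and surjectivity.

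\textbf{Well-definedness.} Because $T$ is a tree, the path $p=(\bot=n_0,\dots,n_k=l)$ is unique. Condition (2) of the definition of an execution model requires $\mathcal{E}(n_j)\in\mathcal{R}_{\mathcal{E}}(n_j)$. Hence no clause fires twice along $p$, and each $RC$ is assigned at most one value. By the definition of firing, the child $n_{j+1}$ is either $\mathcal{I}(n_j)\cup\{h_i(RC)\}$ for a unique $i$ (labelled $\pi_i(RC)$) or the ``no head'' child, which corresponds to $\bot$. So $\sigma(l)$ is a well-defined selection.

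\textbf{The model claim.} I would show by induction along $p$ that $\mathcal{I}(n_j)$ is contained in the least (unique supported) model of $\mathcal{P}^{\sigma(l)}$. The base case is $\mathcal{I}(\bot)=\emptyset$. For the inductive step, an atom $h_i(RC)$ is added only when $\mathcal{I}(n_j)\models\mathrm{body}(RC)$, and the rule $h_i\leftarrow \mathrm{body}(RC)$ then lies in $\mathcal{P}^{\sigma(l)}$. Negative body literals need care: they must stay true at $l$. Condition (4) handles this, since the body is not Undefined in the potential $\nu_{n_j}$, and the potential over-approximates every atom that could still be derived. So an atom false under the potential can never become true later.

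For the converse inclusion, condition (3) says that no unfired clause has a satisfied body at $l$. Every rule of $\mathcal{P}^{\sigma(l)}$ either fired on $p$, so its head is in $\mathcal{I}(l)$, or is a clause whose body fails in $\mathcal{I}(l)$. Hence $\mathcal{I}(l)$ is closed under $\mathcal{P}^{\sigma(l)}$, and by supportedness it is exactly its model, which gives $\mathcal{P}^{\sigma(l)}\models\mathcal{I}(l)$.

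\textbf{Surjectivity.} Given an arbitrary selection $\sigma$, I would descend from the root: at each node $n$, follow the child prescribed by $\sigma(\mathcal{E}(n))$. Since every non-leaf node fires exactly one clause and every head option (including the empty one) has a child, this walk is well defined and terminates at a leaf $l$. By construction, $\sigma(l)$ agrees with $\sigma$ on every clause fired along the path. A clause $RC$ that never fires has $\sigma(l)(RC)=\bot$ by definition, while $\sigma$ may assign it some $i$. So the statement must be read up to such don't-care values, or I must argue that these clauses have bodies false in $\mathcal{I}(l)$ by (3). In the latter case the choice does not affect $\mathcal{P}^{\sigma}$'s model, and the two selections coincide on the relevant part.

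\textbf{Main obstacle.} I expect the hardest step to be the negation/potential argument in the model claim: showing that an atom judged false early is never derived later. I would attack it by proving monotone shrinkage of potentials along $p$, namely $\nu_{n_{j+1}}\le\nu_{n_j}$ in the truth order on undefined atoms. From this it follows that a negative literal certified at $n_j$ remains true at $l$.
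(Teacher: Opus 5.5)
The paper gives no proof of this lemma: it is quoted from Vennekens et al.\ (2009) via Kiesel et al.\ (2023). Your argument therefore has to stand on its own, and in substance it does.

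You are right that the last sentence is false as written. Take $\mathcal{P}=\{b:0.5\leftarrow a\}$. The only execution model is the bare root, which is a leaf by (3), so $\sigma(l)(RC)=\bot$. The selection with $\sigma(RC)=1$ has probability $0.5$, yet it equals $\sigma(l)$ for no leaf. Your corrected version is the one actually used downstream. It says that $\sigma(l)$ agrees with $\sigma$ on the clauses fired along the path, that the remaining clauses have bodies false in $\mathcal{I}(l)$ by (3), and hence that $\mathcal{P}^{\sigma}\models\mathcal{I}(l)$ as well. This is what gives $\pi^{\mathcal{T}}(l)=\sum_{\sigma\text{ compatible with }l}\pi(\sigma)$.

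Three steps need repair.

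\textbf{First, uniqueness of $i$.} $i$ is not unique in general. If $h_i(RC)\in\mathcal{I}(n_j)$ already, the $i$-th child and the empty child carry the same interpretation, so the defining equation holds for several values; repeated head atoms cause the same problem. Define $\sigma(l)(RC)$ by the index of the child through which $p$ passes. Your descent in the surjectivity part already does this implicitly.

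\textbf{Second, the model claim.} The two-inclusion argument does not work as stated.
\begin{enumerate}
\item The step $\mathcal{I}(n_j)\subseteq M$ is circular for negative literals: persistence gives you $q\notin\mathcal{I}(l)$, not $q\notin M$.
\item Closedness of $\mathcal{I}(l)$ does not give $M\subseteq\mathcal{I}(l)$ for normal programs. For example, $\{b\}$ is closed under $a\leftarrow\neg b$ but misses its unique supported model $\{a\}$.
\end{enumerate}
Instead, show directly that $\mathcal{I}(l)$ is a supported model of $\mathcal{P}^{\sigma(l)}$.
\begin{enumerate}
\item Closure is immediate, because $\mathcal{P}^{\sigma(l)}$ consists only of clauses fired on $p$ whose chosen head was added. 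Condition (3) plays no role here; it is needed only for the surjectivity part.
\item Support is your persistence argument, applied to the clause that first added each atom.
\end{enumerate}
Then conclude by uniqueness of supported models. That uniqueness is not a hypothesis of the lemma, but it holds where the lemma is used (Theorems~\ref{thrm:swip_cp1} and~\ref{thrm:swip_cp2}). Without it, for instance under the well-founded semantics used for LPADs, supportedness would not suffice, since $a\leftarrow a$ has two supported models. You would then also have to use the order in which the process adds atoms.

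\textbf{Third, the potential invariant.} The inequality $\nu_{n_{j+1}}\le\nu_{n_j}$ in the truth order fails for the atom just caused, which is $u$ in $\nu_{n_j}$ and $t$ in $\nu_{n_{j+1}}$. The correct invariant is growth in precision: true stays true and false stays false. To prove that false stays false, simulate any hypothetical derivation at $n_{j+1}$ inside $\nu_{n_j}$. The simulation uses:
\begin{enumerate}
\item $\mathcal{I}(n_j)\subseteq\mathcal{I}(n_{j+1})$;
\item $\mathcal{R}_{\mathcal{E}}(n_{j+1})\subseteq\mathcal{R}_{\mathcal{E}}(n_j)$;
\item terminality of the derivation at $n_j$;
\item the fact that the new atom is already non-false in $\nu_{n_j}$, because it heads $\mathcal{E}(n_j)$, whose body never becomes false during a derivation at $n_j$.
\end{enumerate}
Condition (4) makes every negative body atom of $\mathcal{E}(n_j)$ false in $\nu_{n_j}$. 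Combined with the invariant, this gives exactly the persistence you need.
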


To finally demonstrate the equivalence of this treatment of counterfactuals to CP-logic, consider the presentation of interventions and counterfactuals in CP-logic from Vennekens et al. (2010).

\begin{algorithm}[H]
\caption{Treatment of Counterfactuals in CP-logic}
\label{alg:cp-counterfactuals}

\begin{algorithmic}[1]
\Require $\mathbf{X}:=\mathbf{x}, \mathbf{E}=\mathbf{e} \subseteq \mathcal{P}$, $\mathfrak{P}$-formula $\phi$.
\Ensure Counterfactual probability $\pi^{CP}_{\mathcal{P}}(\phi \mid E = e, do(X := x))$.
\State Choose an execution model $\mathcal{T}$ of $\mathcal{P}$.
\For{each leaf $l$ of $\mathcal{T}$}
    \State Intervene in the logic program $\mathcal{P}^{\sigma(l)}$ according to $X := x$ to obtain $\mathcal{P}^{\sigma(l),do(X:=x)}$.
    \State Define
    \[
        \pi^l(\phi) :=
        \begin{cases}
            1, & \mathcal{I}(l) \models (E = e) \;\wedge\; \mathcal{P}^{\sigma(l),do(X:=x)} \models \phi \\
            0, & \text{else}.
        \end{cases}
    \]
\EndFor
\State Return
\[
\pi^{CP}_{\mathcal{P}}(\phi \mid E = e, do(X := x)) :=
\sum_{l \text{ leaf of } \mathcal{T}} 
\pi^l(\phi) \cdot \pi^{CP}_{\mathcal{P}}(\mathcal{I}(l) \mid E = e).
\tag{B1}
\]
\end{algorithmic}
\end{algorithm}

As we have established in Lemma \ref{lem:fix_cp}, we can analogously posit that with the rewiring of clauses under the $fix$-operator the following algorithm:

\begin{algorithm}[H]
\caption{Fixed Operator Counterfactuals in CP-logic}
\label{alg:cp-fix_counterfactuals}

\begin{algorithmic}[1]
\Require $\mathbf{X}:=\mathbf{x}, \mathbf{E}=\mathbf{e} \subseteq \mathcal{P}$, $\mathfrak{P}$-formula $\phi$.
\Ensure Counterfactual probability $\pi^{CP}_{\mathcal{P}}(\phi \mid E = e, fix(X := x))$.
\State Choose an execution model $\mathcal{T}$ of $\mathcal{P}$.
\For{each leaf $l$ of $\mathcal{T}$}
    \State Intervene in the logic program $\mathcal{P}^{\sigma(l)}$ according to $X := x$ to obtain $\mathcal{P}^{\sigma(l),fix(X:=x)}$.
    \State Define
    \[
        \pi^l(\phi) :=
        \begin{cases}
            1, & \mathcal{I}(l) \models (E = e) \;\wedge\; \mathcal{P}^{\sigma(l),fix(X:=x)} \models \phi \\
            0, & \text{else}.
        \end{cases}
    \]
\EndFor
\State Return
\[
\pi^{CP}_{\mathcal{P}}(\phi \mid E = e, fix(X := x)) :=
\sum_{l \text{ leaf of } \mathcal{T}} 
\pi^l(\phi) \cdot \pi^{CP}_{\mathcal{P}}(\mathcal{I}(l) \mid E = e).
\tag{B2}
\]
\end{algorithmic}
\end{algorithm}

With these preparations we can now turn to the proof of the desired consistency results:

\begin{proof}[Proof of Theorem \ref{thrm:swip_cp1}]
By Theorem 6, Lemma 9 and Lemma 8 the right-hand side of (B2) for $\mathcal{P}$ is the sum of the conditional probabilities $\pi(\mathcal{E}|E=e)$ of all possible worlds $\mathcal{E}$ of Prob$(\mathcal{P})$ such that
\[
\mathcal{M}\big(\mathcal{E}, \text{LP}(\text{Prob}(\mathcal{P})^{do(X:=x)})\big) \models \phi 
\quad \text{and} \quad
\mathcal{M}(\mathcal{E}, \text{LP}(\text{Prob}(\mathcal{P}))) \models (E=e).
\]
These are exactly the possible worlds that make the query $\phi$ true after intervention while the observation $E=e$ is true before intervening. Hence, we can consult the proof of Theorem 3 to see that (B1) computes the same value as Algorithm \ref{alg:cp-fix_counterfactuals}.
\end{proof}

\begin{proof}[Proof of Theorem \ref{thrm:swip_cp2}]
By Theorem 7, Lemma 8 and Lemma 9 the right-hand side of (B2) for LPAD$(\mathcal{P})$ is the sum of the conditional probabilities $\pi(\mathcal{E}|E=e)$ of all possible worlds $\mathcal{E}$ of $\mathcal{P}$ such that
\[
\mathcal{M}\big(\mathcal{E}, \text{LP}(\mathcal{P}^{do(X:=x)})\big) \models \phi 
\quad \text{and} \quad
\mathcal{M}(\mathcal{E}, \text{LP}(\mathcal{P})) \models (E=e).
\]
These are exactly the possible worlds that make the query $\phi$ true after intervention while the observation $E=e$ is true before intervening. Hence, we can consult the proof of Theorem 3 to see that (B1) computes the same value as Algorithm \ref{alg:cp-fix_counterfactuals}.
\end{proof}

\section{Proof of Theorem 5.1}
\begin{proof}
We summarize notation from Kiesel et al. (2023) \cite{Kiesel.etal2023}. First, we define two propositional alphabets $\mathfrak{P}^S$ to handle the evidence from the source world and 
$\mathfrak{P}^I$ to handle the interventions in the counterfactual. In particular, we set
\[
e(\mathfrak{P}^S) \;=\; e(\mathfrak{P})
\quad\text{and}\quad
i(\mathfrak{P}^I) \;=\; i(\mathfrak{P}),
\]
and we require that
\[
\mathfrak{P}^S \;\cap\; \mathfrak{P}^I \;=\; \emptyset
\]
In this way, we obtain maps
\[
e/i : \; \mathfrak{P} \;\to\; \mathfrak{P}^{\,S/I}
\quad\text{by}\quad
p \;\mapsto\; p^{\,S/I},
\]
that easily generalize to literals, clauses, and programs.

Furthermore, we define the \emph{counterfactual semantics} of $\mathcal{P}$ by
\[
\mathcal{P}^K \;=\; \mathcal{P}^S \;\cup\; \mathcal{P}^I.
\]
Next, we intervene in $\mathcal{P}^K$ for the counterfactual query is for the probability of a consequent $\phi^I$ in the interventional world given evidence $\textbf{E}^S=\textbf{e}$ in the source world, after applying an intervention $fix(\textbf{X}:=\textbf{x})$ to the interventional part of the program. Let 
$P^{K,fix(\textbf{X}:=\textbf{x})}$ denote this modified program given by Algorithm \ref{alg:swip_transform}. Finally, we obtain the desired probability 
\(
\pi^{\mathrm{SCM}}_{P}(\,\phi\mid \textbf{E} = \textbf{e},\; fix(\textbf{X}:=\textbf{x})\,)
\)
by querying the program $P^{K,fix(\textbf{X}:=\textbf{x})}$ for the conditional probability 
$\pi(\phi\mid \textbf{E} = \textbf{e})$.

% \begin{align*}
% \pi_{P}^{SCM}(\phi|\bf E = \bf e, fix(\bf X - \bf x)) &= \pi_{P^{K, fix(\bf X^i = \bf x)}}(\phi^i|\bf E^e = \bf e) \\
% &= \frac{\pi^{SCM}_{P^{K, fix(\bf X^i = \bf x)}}(\phi^i,\bf E^e = \bf e)}{\pi^{SCM}_{P^{K, fix(\bf X^i = \bf x)}}(\bf E^e = \bf e)} \\
% &= \frac{1}{\pi^{SCM}_{P^{K, fix(\bf X^i = \bf x)}}(\bf E^e = \bf e)} \sum_{
%     \substack{
%         &\varepsilon \, possible \,world \\
%         &M(\varepsilon, P^{K, fix(\bf X^i = \bf x)}) \models\phi^i \\
%         &M(\varepsilon, P^{K, fix(\bf X^i = \bf x)}) \models (\bf E^e = \bf e) 
%     } }         \pi_{P^{K,fix(\bf X^i = \bf x}} (\varepsilon)\\
% &= \frac{1}{\pi^{SCM}_{P}(\bf E = \bf e)} \sum_{
%     \substack{
%         &\varepsilon \, possible \,world \\
%         &M(\varepsilon, P^{fix(\bf X = \bf x)}) \models\phi \\
%         &M(\varepsilon, P) \models (\bf E^e = \bf e) 
%     } }    \pi^{SCM}_{P^K, fix(\bf X^i = \bf x} (\varepsilon)\\
% &= \sum_{
%     \substack{
%         &\varepsilon \, possible \,world \\
%         &M(\varepsilon, P^{K, fix(\bf X= \bf x)}) \models\phi \\
%     } }         \pi^{SCM}_{P} (\varepsilon|\bf E = \bf e) \\
% &= \pi_{SCM(P)}(\phi|\bf E = \bf e, \textit{fix}(\bf X = \bf x))
% \end{align*}
\begin{align*}
\pi_{\mathcal{P}}^{SCM}(\phi \mid \mathbf{E} = \mathbf{e},\, fix(\mathbf{X} = \mathbf{x})) 
&:= \pi_{\mathcal{P}^K, fix(\mathbf{X}:=\mathbf{x})}(\phi^I \mid \mathbf{E}^S = \mathbf{e}) \\
&= \frac{\pi_{\mathcal{P}^K, fix(\mathbf{X}:=\mathbf{x})}(\phi^I \land \mathbf{E}^S = \mathbf{e})}{\pi_{\mathcal{P}^K, fix(\mathbf{X}:=\mathbf{x})}(\mathbf{E}^S = \mathbf{e})} \\
&= \frac{\pi_{\mathcal{P}^K, fix(\mathbf{X}:=\mathbf{x})}(\phi^I \land \mathbf{E}^S = \mathbf{e})}{\pi_{\mathcal{P}^K}(\mathbf{E}^S = \mathbf{e})} \\
&= \frac{1}{\pi_{\mathcal{P}^K}(\mathbf{E}^S = \mathbf{e})} \sum_{\varepsilon} \pi_{\mathcal{P}^K}(\varepsilon) \cdot [\varepsilon \models_{\mathcal{P}^K_{fix(\mathbf{X}:=\mathbf{x})}} (\phi^I \land \mathbf{E}^S = \mathbf{e})] \\
&= \frac{1}{\pi_{\mathcal{P}}(\mathbf{E} = \mathbf{e})} \sum_{\varepsilon} \pi_{\mathcal{P}}(\varepsilon) \cdot [\varepsilon \models_{\mathcal{P}^K} \mathbf{E}^S = \mathbf{e}] \cdot [\varepsilon \models_{\mathcal{P}^K_{fix(\mathbf{X}:=\mathbf{x})}} \phi^I] \\
&= \frac{\sum_{\varepsilon} \pi_{\mathcal{P}}(\varepsilon) \cdot [\varepsilon \models \mathbf{E} = \mathbf{e}] \cdot [\varepsilon \models_{\mathcal{M}_x} \phi]}{\sum_{\varepsilon} \pi_{\mathcal{P}}(\varepsilon) \cdot [\varepsilon \models \mathbf{E} = \mathbf{e}]} \\
&= \frac{P_{SCM}(\phi_{\mathbf{X} \leftarrow \mathbf{x}}, \mathbf{E} = \mathbf{e})}{P_{SCM}(\mathbf{E} = \mathbf{e})} \\
&= P_{SCM}(\phi \mid \mathbf{E} = \mathbf{e},\, fix(\mathbf{X} := \mathbf{x})) 
\end{align*}
\end{proof}

\end{document}